\newtheorem{theorem}{Theorem}
\newcommand{\mnist}{\textsc{Mnist}\xspace}
\newcommand{\cifar}{\textsc{Cifar-10}\xspace}
\newcommand{\pgd}{\textsc{PGD}\xspace}
\newcommand{\sdpfo}{\textsc{SDP-FO}\xspace}
\newcommand{\mlp}{MLP}
\newcommand{\sdpip}{SDP-IP}
\newcommand{\Hl}{H(\lambda)}
\newcommand{\gl}{g(\lambda)}
\newcommand{\cl}{c(\lambda)}
\newcommand{\ccr}{~\citep{raghunathan2018semidefinite}}
\newcommand{\ccs}{~\citep{salman_barrier}}
\newcommand{\citersl}{~\citep{raghunathan2018certified}}
\newcommand{\CH}{\cellcolor{Highlight}}
\newcommand{\lambdamin}{\lambda_\text{min}}
\newcommand{\deriv}[2]{\frac{\partial #1}{\partial #2}}
\DeclarePairedDelimiter{\ip}{\langle}{\rangle}
\DeclarePairedDelimiter{\hbr}{[}{]}
\DeclarePairedDelimiter{\hbbr}{\big[}{\big]}
\newcommand{\lims}[1]{\limits_{#1}\;}
\newcommand{\hp}{\hphantom{0}}
\newcommand{\hd}{\hphantom{$^\ddag$}}
\renewcommand*\env@matrix[1][\arraystretch]{%
  \edef\arraystretch{#1}%
  \hskip -\arraycolsep
  \let\@ifnextchar\new@ifnextchar
  \array{*\c@MaxMatrixCols c}}
\renewcommand{\thefootnote}{\fnsymbol{footnote}}
\newcommand\blfootnote[1]{%
  \begingroup
  \renewcommand\thefootnote{}\footnote{#1}%
  \endgroup
}
\renewcommand{\thefootnote}{\arabic{footnote}}
\definecolor{Highlight}{gray}{0.9}
\title{Enabling certification of verification-agnostic networks via memory-efficient semidefinite programming}
\def \dm{$^1$}
\def \brain{$^2$}
\def \stanford{$^3$}
\def \berkeley{$^4$}
\def \exbrain{$^5$}
\author{
  Sumanth Dathathri$^*$\dm, Krishnamurthy (Dj) Dvijotham$^*$\dm, Alex Kurakin$^*$\brain, \\
\textbf{Aditi Raghunathan$^*$\stanford,
  Jonathan Uesato$^*$\dm,
  Rudy Bunel\dm,
  Shreya Shankar\stanford,} \\
\textbf{
  Jacob Steinhardt\berkeley,
  Ian Goodfellow\exbrain,
  Percy Liang\stanford,
  Pushmeet Kohli\dm} \vspace{1ex}\\
\dm DeepMind \brain Google Brain \stanford Stanford \berkeley UC Berkeley \exbrain Work done at Google \vspace{1ex}\\
\texttt{\{dathathri,dvij,kurakin,juesato\}@google.com,} \texttt{aditir@stanford.edu}
}
\theoremstyle{plain}
\theoremstyle{definition}
\theoremstyle{remark}
\newenvironment{manualtheorem}[1]{%
  \manualtheoreminner
}{\endmanualtheoreminner}
\newcommand{\pl}[1]{}
\newcolumntype{s}{>{\columncolor[HTML]{FBDAD3}} p{2.25cm}}
\newcolumntype{i}{>{\columncolor[HTML]{FBDAD3}} p{2.25cm}}
\newcommand{\R}{\mathbb{R}}
\newcommand{\br}[1]{\left({#1}\right)}
\newcommand{\norm}[1]{\left\| {#1} \right\|}
\DeclareMathOperator*{\argmin}{argmin}
\DeclareMathOperator*{\diag}{diag}
\newcommand{\xin}{x_0}
\newcommand{\tran}[1]{{\left({#1}\right)}^\top}
\newcommand{\lambdalin}{\lambda_{\mathrm{lin}}}
\newcommand{\Av}{\mathbb{A}}
\newcommand{\eqdef}{\eqqcolon}
\newcommand{\opt}{\texttt{opt}}
\newcommand{\ball}[1]{{\mathcal{B}_\epsilon(#1)}}
\newcommand{\layer}{\mathbb{L}}
\newcommand{\lambdaquad}{\lambda_\text{quad}}
\newcommand{\lquada}{\lambda_i^\text{c}}
\newcommand{\lquadb}{\lambda_i^\text{d}}
\newcommand{\llina}{\lambda_i^\text{a}}
\newcommand{\llinb}{\lambda_i^\text{b}}
\newcommand\sL{\ensuremath{\mathcal{L}}}
\newcommand\half{\ensuremath{\frac{1}{2}}}
\begin{document}

\maketitle
\begin{abstract}
Convex relaxations have emerged as a promising approach for verifying desirable properties of neural networks like robustness to adversarial perturbations. Widely used Linear Programming (LP) relaxations only work well when networks are trained to facilitate verification. This precludes applications that involve \emph{verification-agnostic} networks, i.e., networks not specially trained for verification.  On the other hand, semidefinite programming (SDP) relaxations have successfully be applied to verification-agnostic networks, but do not currently scale beyond small networks due to poor time and space asymptotics. In this work, we propose a first-order dual SDP algorithm that (1) requires memory only linear in the total number of network activations, (2) only requires a fixed number of forward/backward passes through the network per iteration. 
By exploiting iterative eigenvector methods, we express all solver operations in terms of forward and backward passes through the network, enabling efficient use of hardware like GPUs/TPUs. 
For two verification-agnostic networks on MNIST and CIFAR-10, we significantly improve $\ell_\infty$ verified robust accuracy from $1\% \to 88\%$ and $6\% \to 40\%$ respectively.
We also demonstrate tight verification of a quadratic stability specification for the decoder of a variational autoencoder.%
\end{abstract}

\vspace{-4mm}
\section{Introduction}
\vspace{-2mm}
\blfootnote{\hspace{-6mm} $^*$ Equal contribution. Alphabetical order.}
\blfootnote{\hspace{-6mm} $^\dagger$ Code available at \href{https://github.com/deepmind/jax_verify}{{\color{blue} \texttt{https://github.com/deepmind/jax\_verify}}}.}
\setcounter{footnote}{0}
Applications of neural networks to safety-critical domains requires ensuring that they behave as expected under all circumstances \cite{kuper2018toward}.
One way to achieve this is to ensure that neural networks conform with a list of \emph{specifications}, i.e., relationships between the inputs and outputs of a neural network that ought to be satisfied. 
Specifications can come from safety constraints (a robot should never enter certain unsafe states \citep{moldovan2012safe, koller2018learning, dalal2018safe}), prior knowledge (a learned physical dynamics model should be consistent with the laws of physics \citep{qin2018verification}), or stability considerations (certain transformations of the network inputs should not significantly change its outputs \citep{szegedy2013intriguing, biggio2013evasion}).

Evaluating whether a network satisfies a given specification is a challenging task, due to the difficulty of searching for violations over the high dimensional input spaces. Due to this, several techniques that claimed to enhance neural network robustness were later shown to break under stronger attacks \citep{uesato2018adversarial, athalye2018obfuscated}.
This has motivated the search for verification algorithms that can provide provable guarantees on neural networks satisfying input-output specifications. 

Popular approaches based on linear programming (LP) relaxations of neural networks are computationally efficient and have enabled successful verification for many specifications \cite{liu2019algorithms,elhersplanet,kolter2017provable,gehrai}. 
LP relaxations are sound (they would never incorrectly conclude that a specification is satisfied) but incomplete (they may fail to verify a specification even if it is actually satisfied).
Consequently, these approaches tend to give poor or vacuous results when used in isolation, though can achieve strong results when combined with specific training approaches to aid verification \citep{gowal2018effectiveness, raghunathan2018certified, wong2018scaling, gehrai, salman2019convex, Balunovic2020Adversarial}. 

In contrast, we focus on \emph{verification-agnostic} models, which are trained in a manner agnostic to the verification algorithm.
This would enable applying verification to all neural networks, and not just those trained to be verifiable.
First, this means training procedures need not be constrained by the need to verify, thus allowing techniques which produce empirically robust networks, which may not be easily verified \citep{madry2017towards}. 
Second, ML training algorithms are often not easily modifiable, e.g. production-scale ML models with highly specific pipelines.
Third, for many tasks, defining formal specifications is difficult, thus motivating the need to learn specifications from data. In particular, in recent work \citep{gowal2020achieving, qiu2019semanticadv, wong2020learning}, natural perturbations to images like changes in lighting conditions or changes in the skin tone of a person, have been modeled using perturbations in the latent space of a generative model.
In these cases, the specification itself is a verification-agnostic network which the verification must handle even if the prediction network is trained with the verification in mind.
 
In contrast to LP-based approaches, the semidefinite programming (SDP) relaxation \cite{raghunathan2018semidefinite} has enabled robustness certification of verification-agnostic networks. However, the interior point methods commonly used for SDP solving are computationally expensive with $O(n^6)$ runtime and $O(n^4)$ memory requirements, where $n$ is the number of neurons in the network \citep{monteiro2003first, tu2014practical}. This limits applicability of SDPs to small fully connected neural networks.

Within the SDP literature, a natural approach is to turn to first-order methods, exchanging precision for scalability  \citep{wen2009first, renegar2014efficient}.
Because verification only needs a bound on the optimal value of the relaxation (and not the optimal solution), we need not design a general-purpose SDP solver, and can instead operate directly in the dual.
A key benefit is that the dual problem can be cast as minimizing the maximum eigenvalue of an affine function, subject only to non-negativity constraints. 
This is a standard technique used in the SDP literature \citep{helmberg2000spectral,nesterov2007smoothing} and removes the need for an expensive projection operation onto the positive semidefinite cone.
Further, since any set of feasible dual variables provides a valid upper bound, we do not need to solve the SDP to optimality as done previously \citep{raghunathan2018semidefinite}, and can instead stop once a sufficiently tight upper bound is attained.

In this paper, we show that applying these ideas to neural network verification results in an efficient implementation both in theory and practice. 
Our solver requires $O(n)$ memory rather than $O(n^4)$ for interior point methods, and each iteration involves a constant number of forward and backward passes through the network.

\textbf{Our contributions.} The key contributions of our paper are as follows:

\vspace{-1.5mm}
\begin{enumerate}[leftmargin=*]
\item By adapting ideas from the first-order SDP literature \citep{helmberg2000spectral,nesterov2007smoothing}, we observe that the dual of the SDP formulation for neural network verification can be expressed as a maximum eigenvalue problem with only interval bound constraints.
This formulation generalizes \citep{raghunathan2018semidefinite} without loss of tightness, and applies to any quadratically-constrained quadratic program (QCQP), including the standard adversarial robustness specification and a variety of network architectures.

Crucially, when applied to neural networks, we show that subgradient computations are expressible purely in terms of forward or backward passes through layers of the neural network. Consequently, applying a subgradient algorithm to this formulation achieves per-iteration complexity comparable to a constant number of forward and backward passes through the neural network.

\item We demonstrate the applicability of first-order SDP techniques to neural network verification. 
We first evaluate our solver by verifying $\ell_\infty$ robustness of a variety of \emph{verification-agnostic} networks on \mnist and \cifar.
We show that our approach can verify large networks beyond the scope of existing techniques.
For these verification-agnostic networks, we obtain bounds an order of magnitude tighter than previous approaches (Figure \ref{fig:lp_sdp_histogram}).
For an adversarially trained convolutional neural network (CNN) with no additional regularization on MNIST ($\epsilon=0.1$), compared to LP relaxations, we improve the verified robust 
accuracy from $\bm{1\%}$ to $\bm{88\%}$. 
For the same training and architecture on \cifar ($\epsilon=2/255$), the corresponding improvement is from $\bm{6\%}$ to $\bm{40\%}$ (Table \ref{tab:results_summary}).
\item To demonstrate the generality of our approach, we verify a different quadratic specification on the stability of the output of the decoder for a variational autoencoder (VAE). 
The upper bound on 
specification violation computed by our solver closely matches the lower bound on specification violation (from PGD attacks) across a wide range of inputs (Section~\ref{subsec:generative_model}).

\end{enumerate}

\section{Related work}
\vspace{-2mm}
{\bf Neural network verification.}
There is a large literature on verification methods for neural networks. Broadly, the literature can be grouped into complete verification using mixed-integer programming \citep{katz2017reluplex, elhersplanet, tjeng2018evaluating, bunel2018unified, anderson2020strong}, bound propagation \cite{ethnips, zhang2018efficient, weng2018towards, gehrai}, convex relaxation \cite{kolter2017provable, dvijotham2018dual, wong2018scaling, raghunathan2018certified}, and randomized smoothing \cite{lecuyer2018certified, cohen2019certified}.
Verified training approaches when combined with convex relaxations have led to promising results \cite{kolter2017provable,raghunathan2018certified,gowal2019scalable,Balunovic2020Adversarial}. Randomized smoothing and verified training approaches requires special modifications to the predictor (smoothing the predictions by adding noise) and/or the training algorithm (training with additional noise or regularizers) and hence are not applicable to the verification-agnostic setting. Bound propagation approaches have been shown to be special instances of LP relaxations \cite{liu2019algorithms}. Hence we focus on describing the convex relaxations and complete solvers, as the areas most closely related to this paper.

{\bf Complete verification approaches.} 
These methods rely on exhaustive search to find counter-examples to the specification, using smart propagation or bounding methods to rule out parts of the search space that are determined to be free of counter-examples. The dominant paradigms in this space are Satisfiability Modulo Theory (SMT) \citep{katz2017reluplex, elhersplanet} and Mixed Integer Programming (MIP) \citep{tjeng2018evaluating, bunel2018unified, anderson2020strong}. The two main issues with these solvers are that: 1) They can take exponential time in the network size and 2) They  typically cannot run on accelerators for deep learning (GPUs, TPUs).

{\bf Convex relaxation based methods.} Much work has relied on linear programming (LP) or similar relaxations for neural-network verification \citep{kolter2017provable, dvijotham2018dual}. 
Bound propagation approaches can also be viewed as a special case of LP relaxations \cite{liu2019algorithms}.
Recent work \cite{salman2019convex} put all these approaches on a uniform footing and demonstrated using extensive experiments that there are fundamental barriers in the tightness of these LP based relaxations and that obtaining tight verification procedures requires better relaxations. A similar argument in \citep{raghunathan2018semidefinite} demonstrated a large gap between LP and SDP relaxations even for networks with randomly chosen weights. \citet{fazlyab2019safety, fazlyab2019efficient} generalized the SDP relaxations to arbitrary network structures and activiation functions. However, these papers use off-the-shelf interior point solvers to solve the resulting relaxations, preventing them from scaling to large CNNs. In this paper, we focus on SDP relaxations but develop customized solvers that can run on accelerators for deep learning (GPUs/TPUs) enabling their application to large CNNs.

{\bf First-order SDP solvers.}
While interior-point methods are theoretically compelling, the demands of large-scale SDPs motivate first-order solvers. Common themes within this literature include smoothing of nonsmooth objectives~\citep{nesterov2007smoothing,lan2011primal,d2014stochastic} and spectral bundle or proximal methods~\citep{helmberg2000spectral,lemarechal2000nonsmooth, parikh2014proximal}.
Conditional gradient methods use a sum of rank-one updates, and when combined with sketching techniques, can represent the primal solution variable using linear space \citep{yurtsever2017sketchy,yurtsever2019scalable}.
Many primal-dual algorithms
~\citep{wen2010alternating,wen2009first,monteiro2003first,kale,ding2019optimal}
exploit computational advantages of operating in the dual -- in fact, our approach to verification operates exclusively in the dual, thus sidestepping space and computational challenges associated with the primal matrix variable.
Our formulation in Section \ref{sec:bc} closely follows the eigenvalue optimization formulation from Section 3 of \citet{helmberg2000spectral}.
While in this work, we show that vanilla subgradient methods are sufficient to achieve practical performance for many problems, many ideas from the first-order SDP literature are promising candidates for future work, and could potentially allow faster or more reliable convergence.
A full survey is beyond scope here, but we refer interested readers to \citet{tu2014practical} and the related work of \citet{yurtsever2019scalable} for excellent surveys.

\vspace{-3mm}
\section{Verification setup}
\vspace{-2mm}
{\bf Notation.}
For vectors $a, b$,  we use $a \le b$ and $a \ge b$ to represent element-wise inequalities. 
We use $\ball{x}$ to denote the $\ell_\infty$ ball of size $\epsilon$ around input $x$. For symmetric matrices $X, Y$, we use $X \succeq Y$ to denote that $X-Y$ is positive semidefinite (i.e. $X-Y$ is a symmetric matrix with non-negative eigenvalues)
We use $\hbr{x}^+$ to denote $\max(x, 0)$ and $\hbr{x}^-$ for $\min(x, 0)$. $\mathbf{1}$ represents a vector of all ones. 

{\bf Neural networks.} We are interested in verifiying properties of neural network with $L$ hidden layers and $N$ neurons that takes input $\xin$. $x_i$ denotes the activations at layer $i$ and the concantenated vector $x = [x_0, x_1, x_2, \cdots, x_L]$ represents all the activations of the network. Let $\layer_i$ denote an affine map corresponding to a forward pass through layer $i$, for e.g., linear, convolutional and average pooling layers. Let $\sigma_i$ is an element-wise activation function, for e.g., ReLU, sigmoid, tanh. In this work, we focus on feedforward networks where $x_{i+1} = \sigma_i \big(\layer_i(x_i)\big)$. 

{\bf Verification.} We study verification problems that involve determining whether $\phi\br{x} \leq 0$ for network inputs $x_0$ satisfying $\ell_0 \leq \xin \leq u_0$ where specification $\phi$ is a function of the network activations $x$. 
\begin{align}
\label{eq:spec}
    \opt &\eqdef \max \limits_x ~\phi(x) ~~ \text{subject to}~~ \underbrace{x_{i+1} = \sigma_i \big(\layer_i(x_i)\big)}_{\text{Neural net constraints}}, ~~\underbrace{\ell_0 \leq \xin \leq u_0}_{\text{Input constraints}}.
\end{align}
The property is verified if $\opt \leq 0$. In this work, we focus on $\phi$ which are quadratic functions. This includes several interesting properties like verification of adversarial robustness (where $\phi$ is linear), conservation of an energy in dynamical systems \citep{qin2018verification}), or stability of VAE decoders (Section \ref{subsec:generative_model}).
Note that while we assume $\ell_\infty$-norm input constraints for ease of presentation, our approach is applicable to any quadratic input constraint.

\vspace{-3mm}
\section{Lagrangian relaxation of QCQPs for verification} 
\label{sec:qcqp_background}
\vspace{-2mm}

A starting point for our approach is the following observation from prior work---the neural network constraints in the verification problem~\eqref{eq:spec} can be replaced with quadratic constraints for ReLUs~\citep{raghunathan2018semidefinite} and other common activations~\citep{fazlyab2019safety}, yielding a Quadratically Constrained Quadratic Program (QCQP). We bound the solution to the resulting QCQP via a Lagrangian relaxation.  
Following~\citep{raghunathan2018semidefinite}, we assume access to lower and upper bounds $\ell_i, u_i$ on activations $x_i$ such that $\ell_i \leq x_i \leq u_i$. They can be obtained via existing bound propagation techniques~\citep{weng2018towards, kolter2017provable, zhang2018efficient}. We use $\ell \leq x \leq u$ to denote the collection of activations and bounds at all the layers taken together.

We first describe the terms in the Lagrangian corresponding to the constraints encoding layer $i$ in a ReLU network: $x_{i+1} = \textrm{ReLU}(\layer_i(x_i))$.
Let $\ell_i, u_i$ denote the bounds such that $\ell_i \leq x_i \leq u_i$.
We associate Lagrange multipliers $\lambda_i = [\llina; \llinb; \lquada; \lquadb]$ corresponding to each of the constraints as follows. 
\begin{align}
 x_{i+1} \geq 0 ~~[\llina], ~&~ x_{i+1} \geq \layer_i(x_i) ~~[\llinb] \nonumber \\
 x_{i+1} \odot \big(x_{i+1} - \layer_i(x_i) \big) \leq 0~~[\lquada], ~&~ x_i \odot x_i -  (\ell_i + u_i) \odot x_i + \ell_i \odot u_i \leq 0~~[\lquadb].
\end{align}
The linear constraints imply that $x_{i+1}$ is greater than both $0$ and $\layer_i(x_i)$. The first quadratic constraint together with the linear constraint makes $x_{i+1}$ equal to the larger of the two, i.e. $x_{i+1} = \max(\layer_i(x_i), 0)$. The second quadratic constraint directly follows from the bounds on the activations. 
The Lagrangian $\sL(x_i, x_{i+1}, \lambda_i)$ corresponding to the constraints and Lagrange multipliers described above is as follows. 
\begin{align}
\label{eq:lag-concrete}
\sL(x_i, x_{i+1}, \lambda_i) &= (-x_{i+1})^\top \llina + (\layer_i(x_i) - x_{i+1})^\top \llinb \nonumber \\
&\quad + \big(x_{i+1} \odot (x_{i+1} - \layer_i(x_i)) \big)^\top \lquada + (x_i \odot x_i -  (\ell_i + u_i) \odot x_i + \ell_i \odot u_i)^\top \lquadb \nonumber\\
&= \underbrace{(\ell_i \odot u_i)^\top\lquadb}_{\text{independent of $x_i, x_{i+1}$}} - \underbrace{x_{i+1}^\top \llina + (\layer_i(x_i))^\top \llinb - x_{i+1}^\top \llinb - x_i^\top \big((\ell_i + u_i)\odot \lquadb \big)}_{\text{linear in $x_i, x_{i+1}$}}  \nonumber \\
&\quad + \underbrace{x_{i+1}^\top \diag(\lquada) x_{i+1} - x_{i+1}^\top\diag(\lquada) \layer_i(x_i) + x_i^\top \diag(\lquadb) x_i}_{\text{Quadratic in $x_i, x_{i+1}$}}.
\end{align}
The overall Lagrangian $\sL(x, \lambda)$ is the sum of $\sL(x_i, x_{i+1}, \lambda_i)$ across all layers together with the objective $\phi(x)$, and consists of terms that are either independent of $x$, linear in $x$ or quadratic in $x$.
Thus, $\sL(x, \lambda)$ is a quadratic polynomial in $x$ and can be written in the form $\sL(x, \lambda) = c(\lambda) + x^\top g(\lambda) + \frac{1}{2}x^\top H(\lambda)x$.
Each of the coefficients $c(\lambda)$, $g(\lambda)$, and $H(\lambda)$ are affine as a function of $\lambda$.
We will describe our approach in terms of $c(\lambda)$, $g(\lambda)$, and $H(\lambda)$, which need not be derived by hand, and can instead be directly obtained from the Lagrangian $\sL(x, \lambda)$ via automatic differentiation as we discuss in Section~\ref{sec:lanczos}.
We observe that $\sL(x, \lambda)$ is itself composed entirely of forward passes $\layer_i(x_i)$ and element-wise operations.
This makes computing $\sL(x, \lambda)$ both convenient to implement and efficient to compute in deep learning frameworks.

Via standard Lagrangian duality, the Lagrangian provides a bound on $\opt$:
\begin{align}
\label{eq:lag-standard}
\opt \leq \min \limits_{\lambda \geq 0}~\max \limits_{\ell \leq x\leq u}~\sL(x, \lambda) = \min \limits_{\lambda \geq 0}~\max \limits_{\ell \leq x \leq u}~c(\lambda) + x^\top g(\lambda) + \frac{1}{2}x^\top H(\lambda)x. 
\end{align}
We now describe our dual problem formulation starting from this Lagrangian ~\eqref{eq:lag-standard}.

\section{Scalable and Efficient SDP-relaxation Solver}
\label{sec:scale-SDP}
Our goal is to develop a custom solver for large-scale neural network verification with the following desiderata:
(1) compute \emph{anytime} upper bounds valid after each iteration, %
(2) rely on elementary computations with efficient implementations that can exploit hardware like GPUs and TPUs,
and (3) have per-iteration memory and computational cost that scales linearly in the number of neurons.

In order to satisfy these desiderata, we employ first order methods to solve the Langrange dual problem~\eqref{eq:lag-standard}. We derive a reformulation of the Lagrange dual with only non-negativity constraints on the decision variables (Section~\ref{sec:bc}). We then show how to efficiently and conveniently compute subgradients of the objective function in Section~\ref{sec:lanczos} and derive our final solver in Algorithm~\ref{alg:verify}.

\subsection{Reformulation to a problem with only non-negativity constraints}
\label{sec:bc}

Several algorithms in the first-order SDP literature rely on reformulating the semidefinite programming problem as an eigenvalue minimization problem \citep{helmberg2000spectral,nesterov2007smoothing}.
Applying this idea, we obtain a Lagrange dual problem which only has non-negativity constraints and whose subgradients can be computed efficiently, enabling efficient projected subgradient methods to be applied.

Recall that $\ell_i, u_i$ denote precomputed lower and upper bounds on activations $x_i$. For simplicity in presentation, we assume $\ell_i = -\mathbf{1}$ and $u_i = \mathbf{1}$ respectively for all $i$. This is without loss of generality, since we can always center and rescale the activations based on precomputed bounds to obtain normalized activations $\bar{x} \in [-1, 1]$ and express the Lagrangian in terms of the normalized activations $\bar{x}$.
\begin{proposition}
\label{prop:main}
The optimal value $\opt$ of the verification problem~\eqref{eq:spec} is bounded above by the Lagrange dual problem corresponding to the Lagrangian in~\eqref{eq:lag-standard} which can be written as follows:
    \begin{align} 
\label{eq:final-sdp}
\opt_\text{relax} &\eqdef \min\limits_{\lambda\ge 0, \kappa \ge 0} \underbrace{\,~ c\br{\lambda} + \half \mathbf{1}^\top\Big[ \kappa - \lambda^{-}_{\text{min}}\br{ \diag(\kappa) - M(\lambda)}\mathbf{1} \Big]^{+}}_{f(\lambda, \kappa)}, ~~M(\lambda) = \begin{pmatrix}
     0 & g\br{\lambda}^\top \\
     g\br{\lambda} & H\br{\lambda}
     \end{pmatrix}, 
    \end{align}
    and $\lambda^{-}_{\text{min}}(Z)=\min\br{\lambda_{\text{min}}(Z), 0}$ is the negative portion of the smallest eigenvalue of $Z$ and $\kappa \in \R^{1 + N}$.
\end{proposition}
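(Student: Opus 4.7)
The plan is to massage the Lagrange dual bound in~\eqref{eq:lag-standard} into the claimed eigenvalue form via three standard moves: homogenization of the inner QP, its SDP-style Lagrangian relaxation, and a diagonal spectral shift that removes the resulting PSD constraint. First I would introduce the lifted variable $y = \br{1, x^\top}^\top \in \R^{1+N}$; by the block structure of $M\br{\lambda}$ one checks that $c\br{\lambda} + x^\top g\br{\lambda} + \tfrac{1}{2} x^\top H\br{\lambda} x = c\br{\lambda} + \tfrac{1}{2} y^\top M\br{\lambda} y$. Under the normalization $\ell = -\mathbf{1}$, $u = \mathbf{1}$, the set $\{y : y_0 = 1,\; x \in [-\mathbf{1},\mathbf{1}]\}$ is strictly contained in $[-1,1]^{1+N}$, so dropping the anchor $y_0 = 1$ yields the valid upper bound
\[
\opt \;\le\; \min_{\lambda \ge 0} \; \max_{y \in [-1,1]^{1+N}} \; c\br{\lambda} + \tfrac{1}{2} y^\top M\br{\lambda} y.
\]

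Second, I would apply the standard SDP-style Lagrangian relaxation to this inner box QP. Introducing a multiplier $\tfrac{1}{2}\kappa_j \ge 0$ for each constraint $y_j^2 \le 1$ gives the inner Lagrangian $c\br{\lambda} + \tfrac{1}{2} y^\top \br{M\br{\lambda} - \diag\br{\kappa}} y + \tfrac{1}{2} \mathbf{1}^\top \kappa$, whose supremum over $y \in \R^{1+N}$ equals $c\br{\lambda} + \tfrac{1}{2} \mathbf{1}^\top \kappa$ when $\diag\br{\kappa} - M\br{\lambda} \succeq 0$ and is $+\infty$ otherwise. Weak duality therefore yields
\[
\opt \;\le\; \min_{\lambda,\kappa \ge 0,\; \diag\br{\kappa} - M\br{\lambda} \succeq 0} \; c\br{\lambda} + \tfrac{1}{2} \mathbf{1}^\top \kappa.
\]

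The remaining, and only nonroutine, step is to eliminate the PSD constraint so the resulting problem has only nonnegativity constraints amenable to a projected subgradient method. For any $\lambda, \kappa \ge 0$, let $t = -\lambda^{-}_{\text{min}}\br{\diag\br{\kappa} - M\br{\lambda}} \ge 0$. Then $\kappa' := \kappa + t \mathbf{1}$ is itself nonnegative and $\diag\br{\kappa'} - M\br{\lambda} = \br{\diag\br{\kappa} - M\br{\lambda}} + t I \succeq 0$, so $\br{\lambda, \kappa'}$ is feasible for the PSD-constrained dual and attains objective $c\br{\lambda} + \tfrac{1}{2} \mathbf{1}^\top \br{\kappa - \lambda^{-}_{\text{min}}\br{\diag\br{\kappa} - M\br{\lambda}}\mathbf{1}}$. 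Because $\kappa \ge 0$ and $-\lambda^{-}_{\text{min}} \ge 0$, each coordinate of this argument is already nonnegative, so wrapping it in the elementwise $[\cdot]^+$ is vacuous but harmless (and useful as a safeguard when subgradient iterates momentarily slip outside the feasible region). Taking the minimum over $\lambda, \kappa \ge 0$ then delivers $\opt_{\text{relax}}$ as claimed. The main content requiring care is this eigenvalue shift identity, which is immediate from the definition of $\lambda^{-}_{\text{min}}$; everything else is standard homogenization and weak duality, so the real conceptual point is that the final bound involves only box/nonnegativity constraints and is therefore well-suited to first-order methods.
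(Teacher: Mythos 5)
Your proof is correct, and it reaches the paper's key intermediate problem by a somewhat different route. The final step --- producing a PSD-feasible dual point by shifting $\kappa$ by $-\lambda^{-}_{\text{min}}\br{\diag(\kappa)-M(\lambda)}$, noting the shifted objective value, and observing that the elementwise $[\cdot]^+$ is vacuous on the nonnegative orthant --- is exactly the paper's argument. Where you differ is in how you arrive at the PSD-constrained dual $\min\{c(\lambda)+\tfrac12\mathbf{1}^\top\kappa \;:\; \lambda,\kappa\ge 0,\ \diag(\kappa)-M(\lambda)\succeq 0\}$: you homogenize to $y=(1,x^\top)^\top$, relax the anchor $y_0=1$ into the box $[-1,1]^{1+N}$, and dualize the constraints $y_j^2\le 1$ directly, needing only weak duality and the elementary fact that $\sup_{y}\, y^\top A y$ is $0$ or $+\infty$ according to whether $A\preceq 0$. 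The paper instead lifts to a matrix variable $X\succeq 0$ with $\diag(X)\le 1$ (the primal SDP relaxation of the inner box QP) and dualizes the diagonal constraint, invoking strong duality for the inner maximization over $X$. Your route is more elementary and entirely sufficient for the proposition as stated (an upper bound via weak duality); the paper's route has the side benefit of explicitly constructing the primal SDP $\opt_\text{sdp}$, which it reuses in Appendix~\ref{app:primalsdp} to show that $\opt_\text{relax}$ coincides with the relaxation of \citet{raghunathan2018semidefinite} --- an equivalence your derivation does not by itself deliver, though it is not claimed in Proposition~\ref{prop:main}.
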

\begin{proof}[Proof Sketch]
  Instead of directly optimizing over the primal variables $x$ in the Lagrangian of the verification problem~\eqref{eq:lag-standard}, we explicitly add the redundant constraint $x^2 \leq 1$ with associated dual variables $\kappa$, and then optimize over $x$ in closed form. This does not change the the primal (or dual) optimum, but makes the  constraints in the dual problem simpler. In the corresponding Lagrange dual problem (now over $\lambda, \kappa$), there is a PSD constraint of the form $\diag(\kappa) \succeq M(\lambda)$. Projecting onto this constraint directly is expensive and difficult. However, for any $(\lambda, \kappa) \succeq 0$, we can construct a dual feasible solution $(\lambda, \hat{\kappa})$ by simply subtracting the smallest eigenvalue of $\diag(\kappa) - M(\lambda)$, if negative. For any non-negative $\lambda, \kappa$, the final objective $f(\lambda, \kappa)$ is the objective of the corresponding dual feasible solution and the bound follows from standard Lagrangian duality. The full proof appears in Appendix~\ref{app:bc}. 
\end{proof}
\emph{Remark 1.} ~\citet{raghunathan2018semidefinite} present an SDP relaxation to the QCQP for the verification of $\ell_\infty$ adversarial robustness. The solution to their SDP is equal to $\opt_\text{relax}$ in our formulation~\eqref{eq:final-sdp} (Appendix~\ref{app:primalsdp}). ~\citet{raghunathan2018semidefinite} solve the SDP via interior-point methods using off-the-shelf solvers which simply cannot scale to larger networks due to memory requirement that is quartic in the number of activations. In contrast, our algorithm (Algorithm \ref{alg:verify}) has memory requirements that scale linearly in the number of activations. \\
\emph{Remark 2.} Our proof is similar to the standard maximum eigenvalue transformulation for the SDP dual, as used in \citet{helmberg2000spectral} or \citet{nesterov2007smoothing} (see Appendix \ref{app:max_eig_comparison} for details).
Crucially for scalable implementation, our formulation avoids explicitly computing or storing the matrices for either the primal or dual SDPs. Instead, we will rely on automatic differentiation of the Lagrangian and matrix-vector products to represent these matrices implicitly, and achieve linear memory and runtime requirements.
We discuss this approach now.

\subsection{Efficient computation of subgradients}
\label{sec:lanczos}
Our formulation in~\eqref{eq:final-sdp} is amenable to first-order methods. Projections onto the feasible set are simple and we now show how to efficiently compute the subgradient of the objective $f(\lambda, \kappa)$.  
By Danskin's theorem \citep{danskin1966theory}, 
\begin{subequations}
\begin{align}
\label{eq:subgradient-sdp}
&\partial_{\lambda, \kappa} \Big( c\br{\lambda} + \half \Big[ \kappa - \big[ {v^\star}^\top \big(\diag(\kappa) - M(\lambda)\big) v^\star \big]\mathbf{1} \Big]^{+\top} \mathbf{1} \Big)  \in \partial_{\lambda, \kappa} f(\lambda, \kappa), \\ 
\label{eq:subgradient-sdpb}
& \;\;\text{where } v^\star = \argmin\lims{\norm{v} = 1}\; v^\top \big(\diag(\kappa) - M(\lambda) \big)v = \text{eigmin}\br{\diag(\kappa) - M(\lambda)}, 
\end{align}\label{eq:subgradients-sdp}
\end{subequations}
\noindent
and $\partial_{\lambda, \kappa}$ denotes the subdiffirential with respect to $\lambda, \kappa$. In other words, given any eigenvector $v^\star$ corresponding to the minimum eigenvalue of the matrix $\diag\br{\kappa}-M\br{\lambda}$, we can obtain a valid subgradient by applying autodiff to the left-hand side of ~\eqref{eq:subgradient-sdp} while treating $v^\star$ as fixed.
\footnote{\hphantom{0}The subgradient is a singleton except when the multiplicity of the minimum eigenvalue is greater than one, in which case any minimal eigenvector yields a valid subgradient.}
The main computational difficulty is computing $v^\star$.
While our final certificate will use an exact eigendecomposition for $v^\star$, for our subgradient steps, we can approximate $v^\star$ using an iterative method such as Lanczos \citep{lanczos1950iteration}.
Lanczos only requires repeated applications of the linear map $\Av \eqdef v \mapsto \big(\diag(\kappa) - M(\lambda) \big)v$. 
This linear map can be easily represented via derivatives and Hessian-vector products of the Lagrangian.

\textbf{Implementing implicit matrix-vector products via autodiff.} 
Recall from Section \ref{sec:qcqp_background} that the Lagrangian is expressible via forward passes through affine layers and element-wise operations involving adjacent network layers.
Since $M(\lambda)$ is composed of the gradient and Hessian of the Lagrangian, we will show computing the map $M(\lambda) v$ is computationally roughly equal to a forwards+backwards pass through the network. 
Furthermore, implementing this map is extremely convenient in ML frameworks supporting autodiff like TensorFlow \cite{abadi2016tensorflow}, PyTorch \cite{paszke2019pytorch}, or JAX \citep{bradburyjax}.
From the Lagrangian~\eqref{eq:lag-standard}, we note that
\begin{align*}
g(\lambda) = \sL_x\br{0,\lambda}=\left.\frac{\partial \sL\br{x, \lambda}}{\partial x}\right|_{0, \lambda}~~\text{and}~~ H(\lambda) v = 
\sL^v_{xx}\br{0, \lambda, v}
=\left.\br{\frac{\partial^2 \sL\br{x, \lambda}}{\partial x \partial x^T}}\right|_{0, \lambda}v
=\left.\frac{\partial v^\top \sL_x\br{0,\lambda}}{\partial x}\right|_{0, \lambda}.
\end{align*}
Thus, $g(\lambda)$ involves a single gradient, and by using a standard trick for Hessian-vector products \citep{pearlmutter1994fast}, the Hessian-vector product $H(\lambda)v$ requires roughly double the cost of a standard forward-backwards pass, with linear memory overhead.
From the definition of $M(\lambda)$ in~\eqref{eq:final-sdp}, we can use the quantities above to get
\begin{align*}
\Av[v] = 
 \big( \diag(\kappa) - M(\lambda) \big) v & %
 =-\kappa \odot v + \begin{pmatrix}
\tran{g\br{\lambda}} v_{1:N} \\ g\br{\lambda}v_0 + H\br{\lambda}v_{1:N} \end{pmatrix} =
\kappa \odot v - \begin{pmatrix}
\tran{\sL_x\br{0, \lambda}} v_{1:N} \\ \sL_x\br{0, \lambda}v_0 + \sL_{xx}^v\br{0, \lambda, v_{1:N}} \end{pmatrix}, 
\end{align*}
where $v_0$ is the first coordinate of $v$ and $v_{1:N}$ is the subvector of $v$ formed by remaining coordinates.

\subsection{Practical tricks for faster convergence}
\label{sec:dual-init}
 \pl{can be more transparent about this:
how to initialize the dual variables $(\lambda, \tau)$?
We just zero out the $\lambda$ corresponding to quadratic terms
to get an LP
}

\if0
Motivated by the observation that simple LP based relaxations \citep{elhersplanet, kolter2017provable, dvijotham2018dual} while providing much looser certificates are much faster to optimize, we consider initializing the dual variables based on solving the fast LP relaxation. 
Given our Lagrangian formulation \eqref{eq:lag}, this corresponds to setting the dual variables of the quadratic terms to zero.
For convenience of notation, we denote dual variables corresponding to linear constraints as $\lambdalin$ and those of quadratic constraints as $\lambdaquad$ with $\lambda=[\lambdalin, \lambdaquad]$. Setting $\lambdaquad=0$, we obtain the following expression for the Lagrangian.
\begin{align}
\label{eq:dual-lin}
\sL_\text{lin}(x, \lambdalin) &= c(\lambdalin) + g(\lambdalin)^\top x, \nonumber \\
\opt &\leq \min \limits_{\lambdalin \succeq 0} \max\limits_{-1 \preceq x \preceq 1}  c(\lambdalin) + g(\lambdalin)^\top x \leq \min \limits_{\lambdalin \succeq 0} c(\lambdalin) + \|g(\lambdalin)\|_1. 
\end{align}
To optimize the linear objective~\eqref{eq:dual-lin}, we use the fast proximal gradient method~\citep{beck2014fast} by observing that the objective is of the form $u(\lambdalin) + v\br{\mathcal{A}\br{\lambdalin}}$ where $\mathcal{A}$ is a linear operator and $u, v$ are convex functions with $u$ being differentiable and $v$ being prox-friendly (ie we can compute the prox operator of $v$ efficiently). See Appendix~\ref{app:fast-prox} for more details.

The above procedure gives a way to set the $\lambdalin$ variables (the remaining variables $\lambdaquad$ are set to $0$ \pl{repetitive}). We denote the resulting initialization of $\lambda$ as $\lambda^0$. What about the other optimization variable $\kappa$?
\fi

The Lagrange dual problem is a convex optimization problem, and a projected subgradient method with appropriately decaying step-sizes converges to an optimal solution \citep{nesterov2018lectures}. However, we can achieve faster convergence in practice through careful choices for initialization, regularization, and learning rates.

{\bf Initialization.}
Let $\kappa_\text{opt}(\lambda)$ denote the value of $\kappa$ that optimizes the bound~\eqref{eq:final-sdp}, for a fixed $\lambda$. We initialize with $\lambda = 0$, and the corresponding $\kappa_\text{\opt}(0)$ using the following proposition. 
\begin{proposition}
\label{prop:kappa_init}
For any choice of $\lambda$ satisfying $H\br{\lambda}=0$, the optimal choice $\kappa_\text{opt}(\lambda)$ is given by
\begin{align*}
\kappa_0^\ast = \sum_{i=1}^n |g\br{\lambda}|_i \ \ \   ; \ \ \ 
\kappa_{1:n}^\ast = |g\br{\lambda}|
\end{align*}
where $\kappa = [ \,\kappa_0 ; \kappa_{1:n} \, ]$ is divided into a leading scalar $\kappa_0$ and vector $\kappa_{1:n}$, and $|g\br{\lambda}|$ is elementwise.
\end{proposition}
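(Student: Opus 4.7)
My plan is to reduce the minimization of $f(\lambda,\kappa)$ over $\kappa \geq 0$ to a scalar convex problem solvable by AM--GM. The three key steps are (i) rewriting the unconstrained min as a constrained SDP, (ii) collapsing the PSD constraint to a single scalar inequality using $H(\lambda)=0$, and (iii) solving the resulting scalar problem explicitly.

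For (i), I would observe that since $\kappa \geq 0$ and $\alpha := -\lambda^{-}_{\min}(\diag(\kappa)-M(\lambda)) \geq 0$, the vector inside the $[\,\cdot\,]^{+}$ in~\eqref{eq:final-sdp} is just $\kappa+\alpha\mathbf{1}$ and is already non-negative, so $f(\lambda,\kappa) = c(\lambda) + \tfrac{1}{2}\mathbf{1}^\top(\kappa+\alpha\mathbf{1})$. By construction, the shifted vector $\tilde\kappa := \kappa+\alpha\mathbf{1}$ satisfies $\diag(\tilde\kappa) - M(\lambda) \succeq 0$ and is therefore feasible for the SDP $\{\tilde\kappa \geq 0,\ \diag(\tilde\kappa) \succeq M(\lambda)\}$, and it attains the same objective value as $\kappa$ (now with a vanishing penalty). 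Conversely, any feasible point of this SDP has $\alpha = 0$. Hence $\min_{\kappa\geq 0} f(\lambda,\kappa) = c(\lambda) + \tfrac{1}{2}\min\{\mathbf{1}^\top\tilde\kappa : \tilde\kappa \geq 0,\ \diag(\tilde\kappa) \succeq M(\lambda)\}$.

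For (ii), under $H(\lambda)=0$ the matrix $\diag(\kappa)-M(\lambda)$ has only $\kappa_0$ in the top-left scalar entry, $\diag(\kappa_{1:n})$ in the bottom-right block, and $\pm g(\lambda)$ in the off-diagonal blocks. I would evaluate its quadratic form on $(v_0, v_{1:n})$, normalize by setting $v_0 = 1$, and minimize coordinate-wise over each $v_i$: the minimum of $\kappa_i v_i^2 - 2 g_i(\lambda) v_i$ equals $-g_i(\lambda)^2/\kappa_i$ when $\kappa_i>0$, while the quadratic is bounded below only if $g_i(\lambda)=0$ whenever $\kappa_i=0$. Thus the PSD constraint is equivalent to the single scalar inequality $\kappa_0 \geq \sum_i g_i(\lambda)^2/\kappa_i$, with the usual $0/0=0$ convention.

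Finally, for (iii), this constraint must bind at an optimum (otherwise $\kappa_0$ can be strictly decreased), so the problem reduces to $\min \sum_i \bigl(g_i(\lambda)^2/\kappa_i + \kappa_i\bigr)$ over $\kappa_{1:n} \geq 0$. AM--GM applied termwise gives $g_i(\lambda)^2/\kappa_i + \kappa_i \geq 2|g_i(\lambda)|$ with equality precisely at $\kappa_i^\star = |g_i(\lambda)|$, which yields $\kappa_{1:n}^\star = |g(\lambda)|$ and $\kappa_0^\star = \sum_i |g_i(\lambda)|$, matching the claim. The only spot requiring care is the boundary case $\kappa_i=0$ with $g_i(\lambda)=0$, but this is absorbed cleanly by the convention above; I do not anticipate a substantive obstacle, so the main ``work'' really lies in the reduction step (i), where one must verify that the penalty term in $f$ behaves exactly like the projection onto the PSD cone.
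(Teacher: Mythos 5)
Your proof is correct and follows essentially the same route as the paper: reduce to the linear objective $c(\lambda)+\tfrac{1}{2}\kappa^\top\mathbf{1}$ under the constraint $\diag(\kappa)-M(\lambda)\succeq 0$, collapse that constraint via Schur complement (equivalently, your coordinate-wise quadratic-form minimization) to $\kappa_0 \geq \sum_i g_i(\lambda)^2/\kappa_i$, and finish with AM--GM. Your step (i), verifying that the eigenvalue-penalty form of $f(\lambda,\kappa)$ has the same minimum over $\kappa$ as the PSD-constrained program, is something the paper simply imports from its proof of Proposition~\ref{prop:main} (Equation~\eqref{eq:app-lag-psd}); your handling of the $\kappa_i=0$ boundary case is slightly more careful than the paper's but changes nothing substantive.
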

See Appendix~\ref{app:initialization} for a proof.
Note that when $\phi(x)$ is linear, $H(\lambda) = 0$ is equivalent to removing the quadratic constraints on the activations and retaining the linear constraints in the Lagrangian~\eqref{eq:lag-standard}. 

{\bf Regularization.}
Next, we note that there always exists an optimal dual solution satisfying $\kappa_{1:n} = 0$, because they are the Lagrange multipliers of a redundant constraint; full proof appears in Appendix \ref{app:kappa_reg}. 
However, $\kappa$ has an empirical benefit of smoothing the optimization by preventing negative eigenvalues of $\Av$.
This is mostly noticeable in the early optimization steps.
Thus, we can regularize $\kappa$ through either an additional loss term $\sum \kappa_{1:n}$, or by fixing $\kappa_{1:n}$ to zero midway through optimization.
In practice, we found that both options occasionally improve final performance.

{\bf Learning rates.}
Empirically, we observed that the optimization landscape varies significantly for dual variables associated with different constraints (such as linear vs. quadratic).
In practice, we found that using adaptive optimizers~\citep{duchi2011adaptive} such as Adam~\citep{kingma2014adam} or RMSProp~\citep{tieleman2014rmsprop} was 
necessary to stabilize optimization. Additional learning rate adjustment for $\kappa_0$ and the dual variables corresponding to the quadratic ReLU constraints provided an improvement on some network architectures (see  Appendix~\ref{app:experimental_details}).

\subsection{Algorithm for verifying network specifications}
\begin{algorithm}[htb] \caption{Verification via \sdpfo{}} 
\begin{small}
\label{alg:verify}               
\hspace*{\algorithmicindent} \textbf{Input:} Specification $\phi$ and bounds on the inputs $\ell_0 \leq x_0 \leq u_0$ \\
\hspace*{\algorithmicindent} \textbf{Output:} Upper bound on the optimal value of \eqref{eq:spec}\\
\begin{algorithmic}
    \STATE \underline{\bf Bound computation}: Obtain layer-wise bounds $\ell, u = \text{BoundProp}(\ell_0, u_0)$ using approaches such as~\citep{mirman2018differentiable, zhang2018efficient}
    \STATE \underline{\bf Lagrangian}: Define Lagrangian $\sL(x, \lambda)$ from \eqref{eq:lag-standard}
    \STATE \underline{\bf Initialization}: Initialize $\lambda, \kappa$ (Section \ref{sec:dual-init})
    
    \FOR{$t=1, \hdots, T$}
    \STATE Define the linear operator $\Av_t$ as
    $\Av_t[v]=\kappa \odot v - \begin{pmatrix}
    \tran{\sL_x\br{0, \lambda}} v_{1:} \\ \sL_x\br{0, \lambda}v_0 + \sL_{xx}^v\br{0, \lambda, v_{1:}} \end{pmatrix}$ (see section \ref{sec:lanczos})
    \STATE $v^\star \gets \text{eigmin}\br{\Av_t}$ using the Lanczos algorithm \cite{lanczos1950iteration}.
    \STATE Define the function 
    $f_t\br{\lambda, \kappa} = \sL\br{0, \lambda} + \Big[ \kappa - \big[ {v^\star}^\top \Av_t[v^\star] \big]\mathbf{1} \Big]^{+\top} \mathbf{1}$ (see \eqref{eq:subgradients-sdp})
    \STATE $\bar{f}_t \gets f_t\br{\lambda^t, \kappa^t}$
    \STATE Update $\lambda^t, \kappa^t$ using any gradient based method to obtain $\tilde{\lambda}, \tilde{\kappa}$ with the gradients:$\tfrac{\partial}{\partial \lambda}f_t(\lambda^t, \kappa^t), \tfrac{\partial}{\partial \kappa}f_t(\lambda^t, \kappa^t)$
    \STATE Project $\lambda^{t+1}\gets\left[\tilde{\lambda}\right]^+, \kappa^{t+1}\gets\left[\tilde{\kappa}\right]^+$.

    \ENDFOR

    \RETURN  $\min_t~ \bar{f}_t$
\end{algorithmic}
\end{small}
\end{algorithm}

We refer to our algorithm (summarized in Algorithm~\ref{alg:verify}) as \sdpfo{} since it relies on  a first-order method to solve the SDP relaxation. 
Although the full algorithm involves several components, the implementation is simple (\textasciitilde 100 lines for the core logic when implemented in JAX\cite{jax2018github}) and easily applicable to general architectures and specifications.
\footnote{\hphantom{0}Core solver implementation at 
\href{https://github.com/deepmind/jax_verify/blob/master/src/sdp_verify/sdp_verify.py}{\color{blue} \texttt{https://github.com/deepmind/jax\_verify/blob/master/src/}\\
\href{https://github.com/deepmind/jax_verify/blob/master/src/sdp_verify/sdp_verify.py}{\color{blue} \texttt{sdp\_verify/sdp\_verify.py}}}
}
SDP-FO uses memory linear in the total number of network activations, with per-iteration runtime linear in the cost of a forwards-backwards pass.

{\bf Computing valid certificates.} Because Lanczos is an approximate method, we always report final bounds by computing $v^\star$ using a non-iterative exact eigen-decomposition method from SciPy \cite{4160250}. In practice, the estimates from Lanczos are very close to the exact values, while using 0.2s/iteration on large convolutional network, compared to 5 minutes for exact eigendecomposition (see Appendix \ref{app:additional_results}).
\vspace{-5pt}

\section{Experiments}\label{sec:experiments}
In this section, we evaluate our SDP-FO verification algorithm on two specifications: robustness to adversarial perturbations for image classifiers (Sec. \ref{subsec:image_classifier}), and robustness to latent space perturbations for a generative model (Sec. \ref{subsec:generative_model}).
In both cases, we focus on verification-agnostic networks.

\subsection{Verification of adversarial robustness}
\label{subsec:image_classifier}
\textbf{Metrics and baselines}
We first study verification of $\ell_\infty$ robustness for networks trained on \mnist and \cifar.
For this specification, the objective $\phi(x)$ in \eqref{eq:spec} is given by $(x_L)_{y'} -  (x_L)_{y}$, where $x_L$ denotes the the final network activations, i.e. logits, $y$ is the index of the true image label, and $y'$ is a target label.
For each image and target label, we obtain a lower bound on the optimal $\underline{\phi}(x) \leq \phi^*(x)$ by running projected gradient descent (\textbf{\pgd})~\citep{madry2017towards} on the objective $\phi(x)$ subject to $\ell_\infty$ input constraints.
A verification technique provides upper bounds $\overline{\phi}(x) \geq \phi^*(x)$.
An example is said to be verified when the worst-case upper bound across all possible labels, denoted $\overline{\phi}_x$, is below 0.
We first compare (\textbf{\sdpfo{}, Algorithm \ref{alg:verify}} to the \textbf{LP} relaxation from ~\citep{elhersplanet}, as this is a widely used approach for verifying large networks, and is shown by~\citep{salman_barrier} to encompass other relaxations including \citep{kolter2017provable,dvijotham2018dual,weng2018towards,zhang2018efficient,mirman2018differentiable,gowal2019scalable}.
We further compare to the SDP relaxation from ~\citep{raghunathan2018certified} solved using MOSEK~\citep{mosek}, a commercial interior point SDP (\textbf{\sdpip{}}) solver, and the \textbf{MIP} approach from ~\citep{tjeng2018evaluating}. 

\begin{figure}[h]
    \centering
    \begin{subfigure}[t]{0.48\textwidth}
        \includegraphics[width=\textwidth]{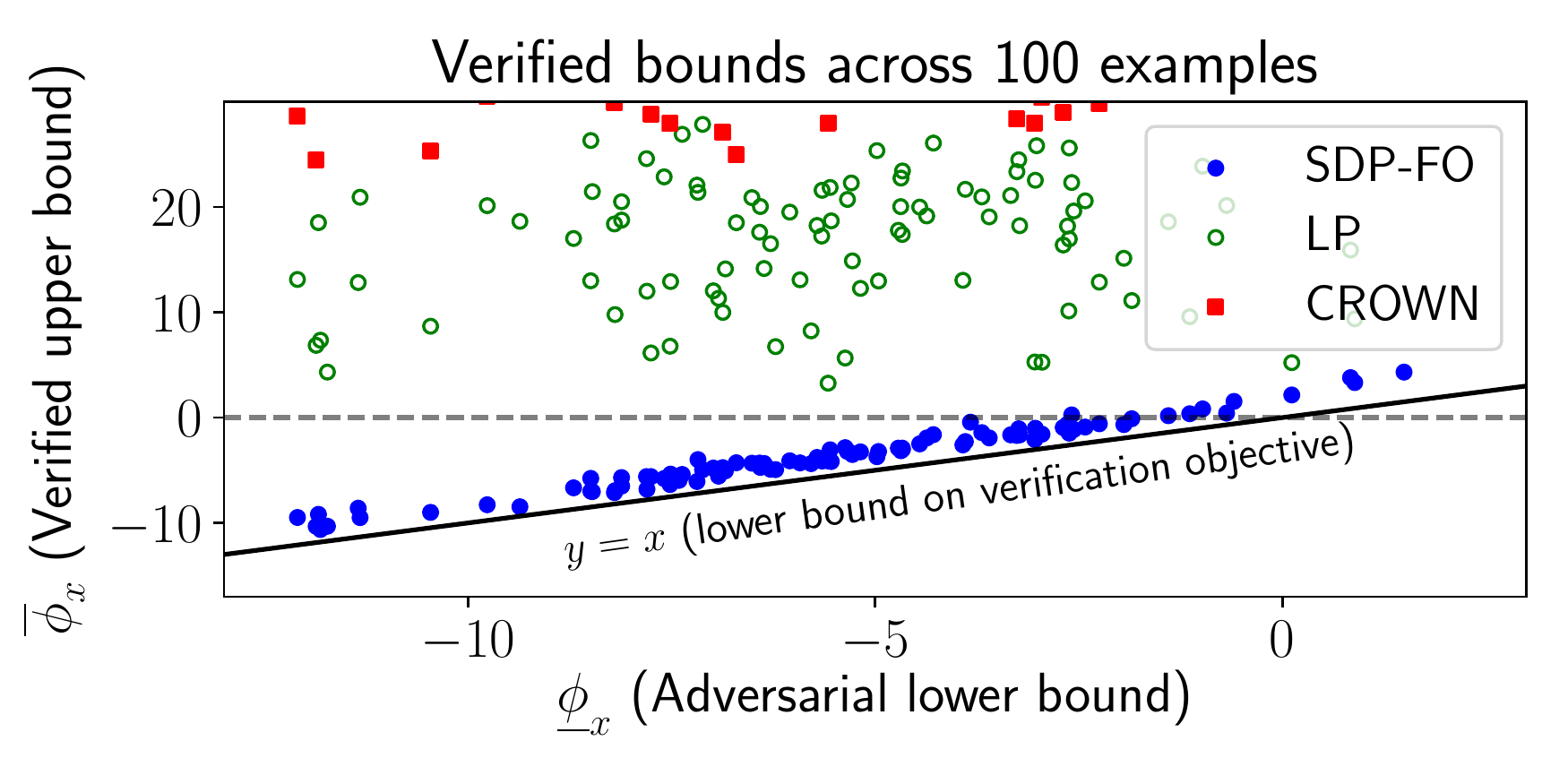}
        \caption{\mnist, CNN-Adv}
    \end{subfigure}
    \begin{subfigure}[t]{0.48\textwidth}
        \includegraphics[width=\textwidth]{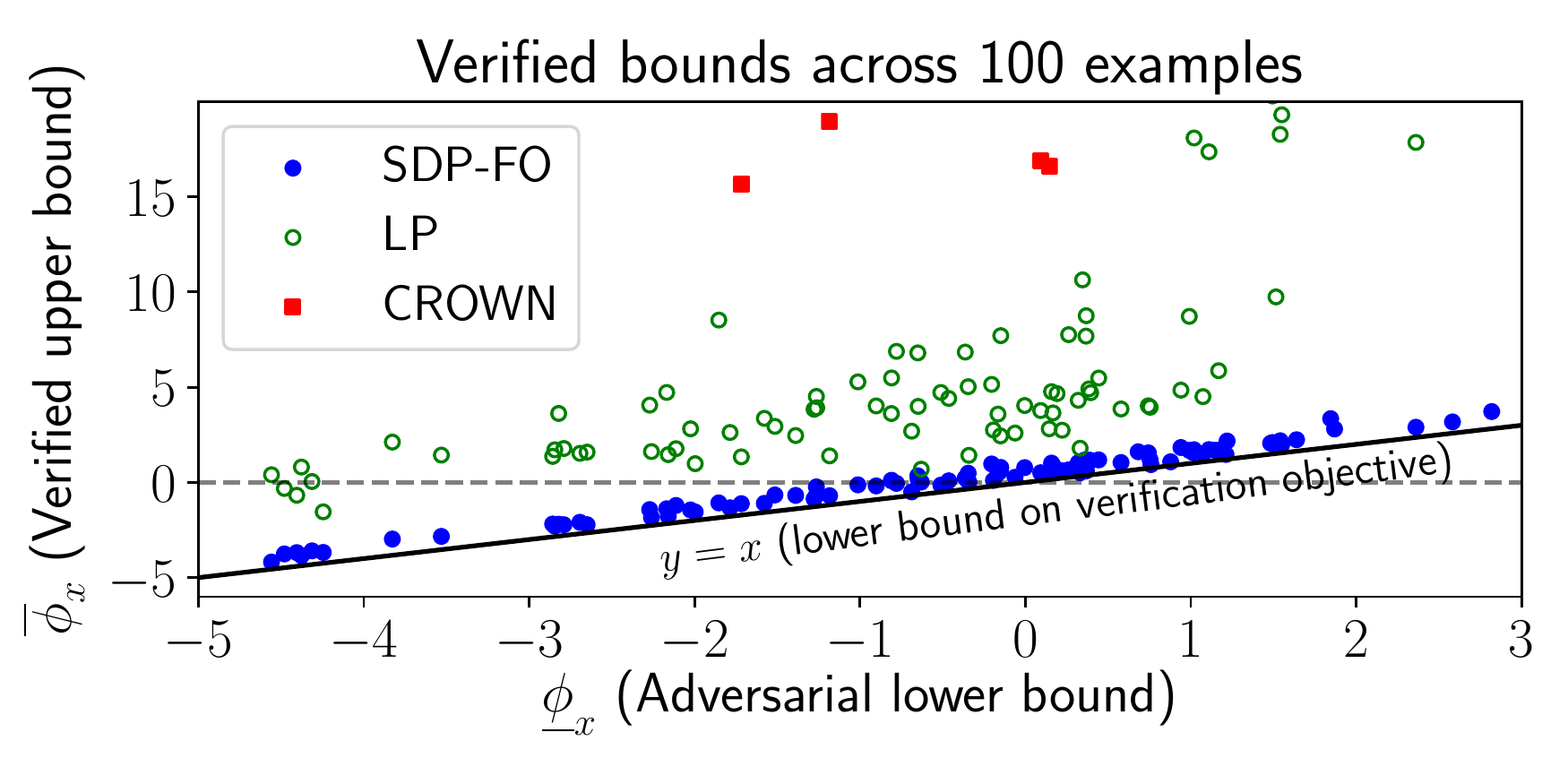}
        \caption{\cifar, CNN-Mix}
    \end{subfigure}
    \caption{
    \emph{Enabling certification of verification-agnostic networks.}
    For 100 random examples on \mnist and \cifar, we plot the verified upper bound on $\phi_x$ against the adversarial lower bound (taking the worst-case over target labels for each).
    Recall, an example is verified when the verified upper bound $\overline{\phi}_x < 0$.
    Our key result is that SDP-FO achieves tight verification across all examples, with all points lying close to the line $y=x$.
    In contrast, LP or CROWN bounds produce much looser gaps between the lower and upper bounds.
    We note that many CROWN bounds exceed the plotted y-axis limits.
    \vspace{-5mm}
    }
\label{fig:lp_sdp_histogram}
\end{figure}

\textbf{Models}
Our main experiments on CNNs use two architectures: \textbf{CNN-A} from ~\citep{wong2018scaling} and \textbf{CNN-B} from ~\citep{Balunovic2020Adversarial}.
These contain roughly 200K parameters + 10K activations, and 2M parameters + 20K activations, respectively.
All the networks we study are verification-agnostic: trained only with nominal and/or adversarial training~\citep{madry2017towards}, without any regularization to promote verifiability.

While these networks are much smaller than modern deep neural networks, they are an order of magnitude larger than previously possible for verification-agnostic networks.
To compare with prior work, we also evaluate a variety of fully-connected \textbf{\mlp} networks, using trained parameters from ~\citep{raghunathan2018certified, salman_barrier}.
These each contain roughly 1K activations.
Complete training and hyperparameter details are included in Appendix \ref{supp:adv}.

\textbf{Scalable verification of verification-agnostic networks}
Our central result is that, for verification-agnostic networks, \sdpfo allows us to tractably provide significantly 
stronger robustness guarantees in comparison with existing approaches.
In Figure \ref{fig:lp_sdp_histogram}, we show that \sdpfo reliably achieves tight verification, despite using loose initial lower and upper bounds obtained from CROWN ~\citep{zhang2018efficient} in Algorithm \ref{alg:verify}.
Table \ref{tab:results_summary} summarizes results.
On all networks we study, we significantly improve on the baseline verified accuracies.
For example, we improve verified robustness accuracy for CNN-A-Adv on \mnist from 0.4\% to 87.8\% and for CNN-A-Mix on \cifar from 5.8\% to 39.6\%.

\begin{threeparttable}[t]
\vspace{-4mm}
\begin{center}
\begin{footnotesize}
\resizebox{\textwidth}{!}{%
\begin{tabular}{lll|cc|cccc}
    \hline
    & & & \multicolumn{2}{c|}{Accuracy} & \multicolumn{4}{c}{Verified Accuracy} \\
    {\bf Dataset} & {\bf Epsilon} &  {\bf Model} & {\bf Nominal} & {\bf PGD} & {\bf SDP-FO (Ours)} & {\bf \sdpip$^\dagger$} & {\bf LP} & {\bf MIP$^\dagger$} \\
    \hline
    \multirow{6}{*}{\mnist} & \multirow{5}{*}{$\epsilon = 0.1$}
     &         MLP-SDP\ccr&     97.6\% &    86.4\% & {\bf\hd85.2\%} &    80\% &     39.5\% &  69.2\% \\
     & &       MLP-LP \ccr&     92.8\% &    81.2\% & {\bf\hd80.2\%} &    80\% &     79.4\% &     -   \\
     & &   \CH MLP-Adv\ccr& \CH 98.4\% & \CH93.4\% & {\bf\CH\hd91.0}\% &\CH 82\% &\CH\ 26.6\% & \CH -   \\
     & & \CH MLP-Adv-B\ccs& \CH 96.8\% & \CH84.0\% & {\bf \CH79.2\%} &\CH -    &\CH\ 33.2\% & \CH 34.4\%\\
     & & \CH CNN-A-Adv & \CH 99.1\% & \CH 95.2\%& {\bf \CH87.8\%} &\CH -    &\CH\hp0.4\% & \CH -   \\\cline{2-2}
     & \multirow{1}{*}{$\epsilon = 0.05$}
     & \CH MLP-Nor \ccs& \CH 98.0\% & \CH46.6\% & {\bf \CH28.0\%} &\CH - &\CH\ \hp1.8\% & \CH \hp6.0\%   \\
    \hline
    \multirow{4}{*}{\cifar} & \multirow{4}{*}{$\epsilon = \frac{2}{255}$}
     &   CNN-A-Mix-4 &    67.8\% &    55.6\% &    {\bf 47.8\%} &    $^{*}$ &      26.8\% &    - \\
     & & CNN-B-Adv-4 &    72.0\% &    62.0\% &    {\bf 46.0\%} &    $^{*}$ &      20.4\% &    - \\
     & &\CH CNN-A-Mix& \CH74.2\% & \CH53.0\% & {\bf \CH39.6\%} & \CH$^{*}$ & \CH\hp5.8\% & \CH- \\
     & &\CH CNN-B-Adv& \CH80.3\% & \CH64.0\% & {\bf \CH32.8\%} & \CH$^{*}$ & \CH\hp2.2\% & \CH- \\
    \hline 
\end{tabular}
}
\end{footnotesize}
\begin{scriptsize}
\begin{tablenotes}[leftmargin=*]
    \item \hspace{-4mm} $^\dagger$ Using numbers from ~\citep{raghunathan2018semidefinite} for \sdpip{} and ~\citep{salman2019convex} using approach of ~\citep{tjeng2018evaluating} for MIP. 
Dashes indicate previously reported numbers are unavailable.
    \item \hspace{-4mm} $^{*}$ Computationally infeasible due to quartic memory requirement.
\end{tablenotes}
\end{scriptsize}
\end{center}
\caption{Comparison of verified accuracy across verification algorithms. Highlighted rows indicate models trained in a verification-agnostic manner.
All numbers computed across the same 500 test set examples, except when using previously reported values.
For all networks, \sdpfo outperforms previous approaches.
The improvement is largest for verification-agnostic models.
\vspace{0mm}\\\hspace{\textwidth}
}
\label{tab:results_summary}
\end{threeparttable}

\textbf{Comparisons on small-scale problems}
We empirically compare \sdpfo{} against \sdpip{} using MOSEK, a commercial interior-point solver.
Since the two formulations are equivalent (see Appendix \ref{app:primalsdp}), 
solving them to optimality should result in the same objective.
This lets us carefully isolate the effectiveness of the optimization procedure relative to the SDP relaxation gap.
However, we note that for interior-point methods, the memory requirements are quadratic in the size of $M(\lambda)$,  
which becomes quickly intractable e.g. $\approx 10$ petabytes for a network with 10K activations.
This restricts our comparison to the small MLP networks from \citep{raghunathan2018semidefinite}, while \sdpfo can scale to significantly larger networks.

In Figure \ref{fig:appendix_scatter} of Appendix \ref{app:sdpip_sdpfo_comparison}, we confirm that on a small random subset of matching verification instances, SDP-FO bounds are only slightly worse than SDP-IP bounds.
This suggests that optimization is typically not an issue for SDP-FO, and the main challenge is instead tightening the SDP relaxation.
Indeed, we can tighten the relaxation by using CROWN precomputed bounds ~\citep{zhang2018efficient} rather than interval arithmetic bounds ~\citep{mirman2018differentiable,gowal2018effectiveness}, which almost entirely closes the gap between \sdpfo and PGD for the first three rows of Table \ref{tab:results_summary}, including the verification-agnostic MLP-Adv.
Finally, compared to numbers reported in ~\citep{salman_barrier}, \sdpfo outperforms the MIP approach using progressive LP bound tightening~\citep{tjeng2018evaluating}.

\textbf{Computational resources}
We cap the number of projected gradient iterations for SDP-FO.
Using a P100 GPU, maximum runtime is roughly 15 minutes per MLP instances, and 3 hours per CNN instances, though most instances are verified sooner.
For reference, SDP-IP uses 25 minutes on a 4-core CPU per MLP instance ~\citep{raghunathan2018semidefinite}, and is intractable for CNN instances due to quartic memory usage.

\textbf{Limitations.}
In principle, our solver's linear asymptotics allow scaling to extremely large networks.
However, in practice, we observe loose bounds with large networks.
In Table \ref{tab:results_summary}, there is already a significantly larger gap between the PGD and \sdpfo{} bounds for the larger CNN-B models compared to their CNN-A counterparts, and in preliminary experiments, this gap increases further with network size.
Thus, while our results demonstrate that the SDP relaxation remains tight on significantly larger networks than those studied in \citet{raghunathan2018semidefinite}, additional innovations in either the formulation or optimization process are necessary to enable further scaling.
\if0
\begin{table}[t]
\begin{minipage}[c]{0.48\textwidth}%
\begin{center}
\begin{footnotesize}
\setlength\tabcolsep{1.5pt}
\renewcommand{\arraystretch}{1.1}
\begin{tabular}{l|cccc}
    \hline
    {\bf Model} & {\bf PGD} & {\bf SDP-FO}& {\bf LP} & {\bf SDP-IP} \\
     & {\bf (lower bound)} & &  &  \\
    \hline
    Grad-NN & 14.43\% & \textbf{16.32\%} & 97\% & 18\%\footnotemark[2]\\
    LP-NN & 18.73\% & \textbf{18.97\%} & 22\% & 20\%\footnotemark[2]\\
    PGD-NN & 6.96\% & \textbf{7.77\%} & 72.1\% & \textbf{20\%}\footnotemark[2]\\
    \hline
\end{tabular}
\end{footnotesize}
\end{center}
\caption{Verified error rate as computed by our method SDP-FO compared to the LP approach from \citep{kolter2017provable} and the SDP-IP approach from \citep{raghunathan2018semidefinite}. Network architectures are described in \citep{raghunathan2018semidefinite}. \label{table:small-nn}}
\end{minipage}
\qquad
\begin{minipage}[c]{0.51\textwidth}%
\centering
    \includegraphics[width=.95\textwidth]{aditi.pdf}
    \vspace{.1cm}
\caption{Fraction of unverified samples as solution time increases.}
\label{fig:time-small-nn}
\end{minipage}
\end{table}
\stepcounter{footnote}\footnotetext{Computed on 1000 samples from the test set.}
\fi

\pl{should probably have a conclusion...we can move the related work to the end, title it 'related work and discussion', add a future-looking / open questions paragraph and call it a day}
\vspace{-5pt}
\subsection{Verifying variational auto-encoders (VAEs)}
\label{subsec:generative_model}
\begin{figure}[h!]
    \centering
          \floatbox[{\capbeside\thisfloatsetup{capbesideposition={right,top},capbesidewidth=8cm}}]{figure}[\FBwidth]
    {\caption{
    Comparison of different approaches for verifying the robustness of the decoder of a VAE on MNIST, measured across 100 samples.
        The lower-bound on the robust accuracy computed with SDP-FO closely matches the
        upper bound based on a PGD adversarial attack upto perturbations of 0.1 $\sigma_z$, while the
        lower bound based on IBP begins to diverge from the PGD upper bound at much smaller perturbations.
    }
    \label{fig:vae_ibp_sdp_adv}
    }
    {\includegraphics[width=0.4\textwidth]{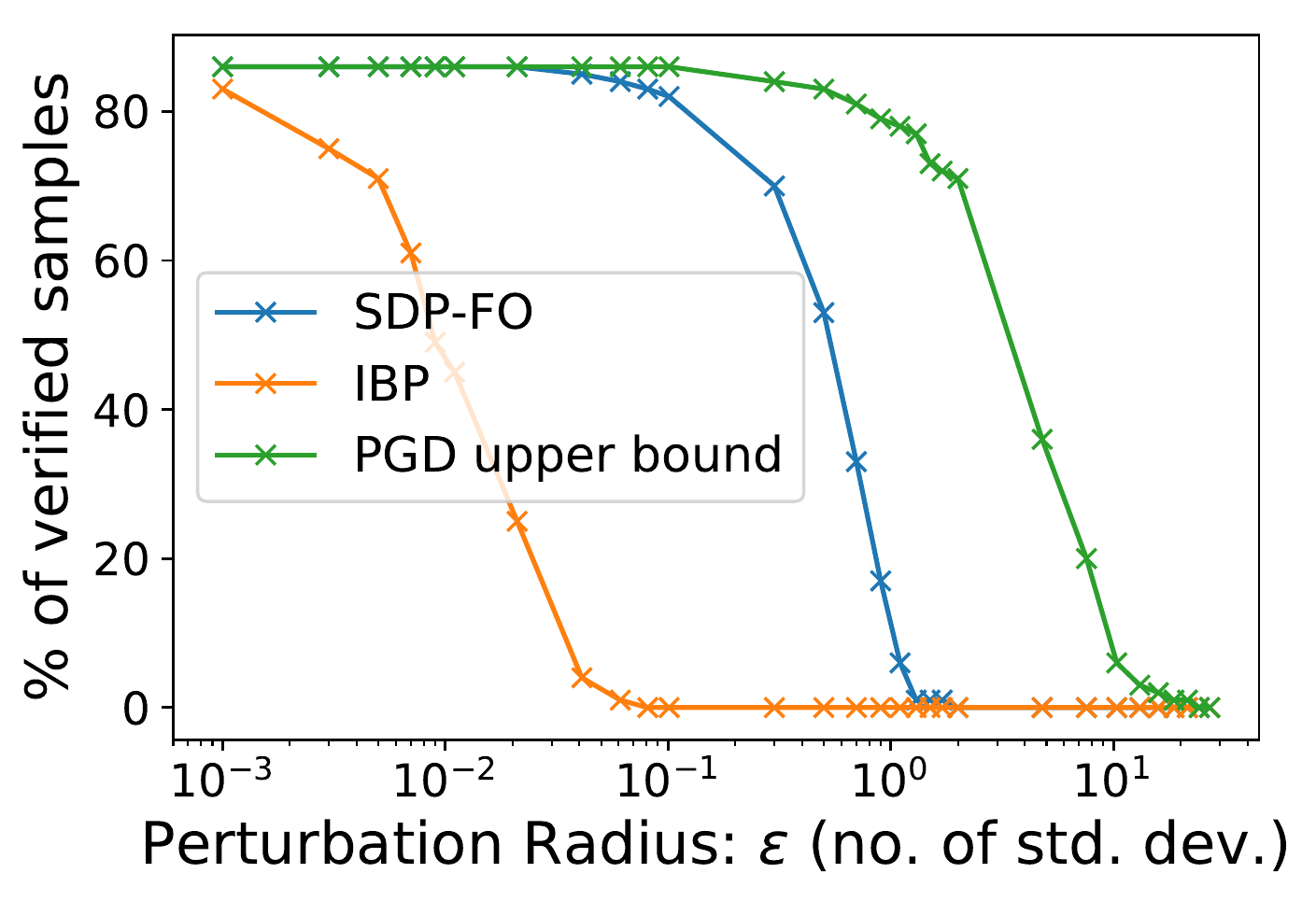}}
\end{figure}
\textbf{Setup}
To test the generality of our approach, we consider a different specification of verifying the validity of constructions from deep 
generative models, specifically variational auto-encoders (VAEs) \citep{vaemain}. 
Let $q_E(z \lvert s) =\mathcal{N}(\mu_E^{z;s}, \sigma_E^{z;s})$ denote the distribution of the latent representation $z$ corresponding 
to input $s$, and let $q_D(s \lvert z)=\mathcal{N}({\mu}^{s; z}_D,  \mathit{I})$ denote the decoder.
Our aim is to certify robustness of the decoder to perturbations in the VAE latent space.
Formally, the VAE decoder is robust to $\ell_\infty$ latent perturbations for input $s$ and perturbation 
radius $\alpha \in \mathbb{R}^{++}$ if:
\begin{align}
\varepsilon_{\mathrm{recon}}(s, \,{\mu}^{s; z}_D) \coloneqq \lVert s - {\mu}^{s; z}_D \rVert_2^2 \leq \tau \qquad \forall z' \textrm{ s.t } \lVert z' - \mu_E^{z;s} \rVert_\infty \leq \alpha \sigma_E^{z;s}, 
\label{eq:vae_spec}
\end{align}
where $\varepsilon_{\mathrm{recon}}$ is the reconstruction error. 
Note that unlike the adversarial robustness setting where the objective was linear, the objective function $\varepsilon_{\mathrm{recon}}$ is quadratic.
Quadratic objectives are not directly amenable to LP or MIP solvers without further relaxing the quadratic objective to a linear one.
For varying perturbation radii $\alpha$, we measure the test set fraction with verified reconstruction error below $\tau=40.97$,
which is the median
squared Euclidean distance between a point $s$ and the closest point with a different label (over MNIST).

\textbf{Results}
We verify a VAE on MNIST with a convolutional decoder containing $\approx$ 10K total activations.
Figure \ref{fig:vae_ibp_sdp_adv} shows the results.
To visualize the improvements resulting from our solver, we include a comparison with guarantees based on interval arithmetic bound propagation (IBP) \citep{gowal2019scalable,mirman2018differentiable},
which we use to generate the bounds used in Algorithm \ref{alg:verify}.
Compared to IBP, SDP-FO can
successfully verify at perturbation radii roughly 50x as large.
For example, IBP successfully verifies 50\% at roughly $\epsilon = 0.01$ compared to $\epsilon = 0.5$ for \sdpfo.
We note that besides the IBP bounds being themselves loose compared to the SDP relaxations, they further suffer from a similar drawback as LP/MIP methods in that
they bound $\varepsilon_{\mathrm{recon}}$ via $\ell_\infty$-bounds, which further results in looser bounds on $\varepsilon_{\mathrm{recon}}$.
Further details and visualizations are included in Appendix \ref{supp:vae}.

\section{Conclusion} 
We have developed a promising approach to scalable tight verification and demonstrated good performance on larger scale than was possible previously. While in principle, this solver is applicable to arbitrarily large networks, further innovations (in either the formulation or solving process) are necessary to get meaningful verified guarantees on larger networks.

\newpage
\subsection*{Acknowledgements}
We are grateful to Yair Carmon, Ollie Hinder, M Pawan Kumar, Christian Tjandraatmadja, Vincent Tjeng, and Rahul Trivedi for helpful discussions and suggestions.
This work was supported by NSF Award Grant no. 1805310.
AR was supported by a Google PhD Fellowship and Open Philanthropy Project AI Fellowship. 

\section*{Broader Impact}
Our work enables verifying properties of verification-agnostic neural networks trained using procedures agnostic to any specification verification algorithm. While the present scalability of the algorithm does not allow it to be applied to SOTA deep learning models, in many applications it is vital to verify properties of smaller models running safety-critical systems (learned controllers running on embedded systems, for example).
The work we have presented here does not address data related issues directly, and would be susceptible to any biases inherent in the data that the model was trained on. However, as a verification technique, it does not enhance biases present in any pre-trained model, and is only used as a post-hoc check.
We do not envisage any significant harmful applications of our work, although it may be possible for adversarial actors to use this approach to verify properties of models designed to induce harm (for example, learning based bots designed to break spam filters or induce harmful behavior in a conversational AI system).

\bibliographystyle{plainnat}
\newpage
\bibliography{references}
\newpage
\appendix

\section{Omitted proofs}
\label{supp:proofs}
\subsection{Adversarial robustness as quadratic specification}
Consider certifying robustness: For input $\xin \in \R^d$ with true label $i$, the network does not misclassify any adversarial example within $\ell_\infty$ distance of $\epsilon$ from $\xin$. This property holds if the score of any incorrect class $j$ is always lower than that of $i$ for all perturbations. Thus $\phi(x) = c^\top x_L$ with $c_j=1$ and $c_i=-1$. The input constraints are also linear: $-\epsilon ~\leq~ x_i - {\xin}_i ~\leq~ \epsilon$, for $i=1, 2, \hdots d$. 

\subsection{Linear and quadratic constraints for ReLU networks}
\label{app:relu}
\textbf{ReLU as quadratic constraints:} For the case of ReLU networks, we can do this exactly, as described in~\citep{raghunathan2018semidefinite}. Consider a single activation $x_\text{post} = \max(x_\text{pre}, 0)$. This can be equivalently written as $x_\text{post} \geq 0, x_\text{post}\geq x_\text{pre}$, stating that $x_\text{post}$ is greater than $0$ and $x_\text{pre}$. 
Additionally, the quadratic constraint $x_\text{post}(x_\text{post} - x_\text{pre})=0$, enforces that $x_\text{post}$ is atleast one of the two. This can be extended to all units in the network allowing us to replace ReLU constraints with quadratic constraints.

\subsection{Formulation of bound constrained dual problem}
\label{app:bc}
\begin{manualtheorem}{1}
		
	The optimal value $\opt$ of the quadratic verification problem~\eqref{eq:spec} is bounded above by
    \begin{align} 
\opt_\text{relax} &\eqdef \min\limits_{\lambda\ge 0, \kappa \ge 0} \underbrace{\,~ c\br{\lambda} + \half \mathbf{1}^\top\Big[ \kappa - \lambda^{-}_{\text{min}}\br{ \diag(\kappa) - M(\lambda)}\mathbf{1} \Big]^{+}}_{f(\lambda, \kappa)}, ~~M(\lambda) = \begin{pmatrix}
     0 & g\br{\lambda}^\top \\
     g\br{\lambda} & H\br{\lambda}
     \end{pmatrix}, 
    \end{align}
	and $\lambda^{-}_{\text{min}}(Z)$ is the negative portion of the smallest eigen value of $Z$, i.e. $[\lambda_{\text{min}}(Z)]^-$ and $\kappa \in \R^{1 + N}$.
\end{manualtheorem}
\begin{proof}
We start with the Lagrangian in~\eqref{eq:lag-standard} with rescaled activations such that $\ell = -\mathbf{1}$ and $u = \mathbf{1}$, where $\ell$ and $u$ are lower and upper bounds on the activations $x \in \R^n$ respectively. This normalization is achieved by using pre-computed bounds via bound propagation, which are used to write the quadratic constraints, as in~\citep{raghunathan2018semidefinite}. 
\begin{align}
  \label{eq:lag-app}
    \opt &\leq \min\limits_{\lambda \succeq 0} \max \limits_{-1 \preceq x \preceq 1} \Big (c(\lambda) + g(\lambda)^\top x + \frac{1}{2} x^\top H(\lambda)x \Big).
\end{align}
Define 
\begin{align*}
\tilde{X} = \begin{pmatrix}
  1 & x^\top \\
  x & x x^\top. 
\end{pmatrix}
\end{align*}
In terms of the above matrix, the above Lagrangian relaxation~\eqref{eq:lag-app} can be equivalently written as:
\begin{align}
  \opt &\leq \eqdef \min\limits_{\lambda \succeq 0} \max \limits_{\diag(\tilde{X}) \leq 1} c(\lambda) + \frac{1}{2} \langle M(\lambda), \tilde{X} \rangle, ~~\text{where} \\
    M(\lambda) &= \begin{pmatrix}
      0 & g\br{\lambda}^\top \\
      g\br{\lambda} & H\br{\lambda}
    \end{pmatrix}
\end{align}
Note that $\tilde{X}$ is always a PSD matrix with diagonal entries bounded above by $1$. This yields the following relaxation of \eqref{eq:lag-app}
\begin{align}
\label{eq:sdp-app}
  \opt \leq \opt_\text{sdp} \eqdef \min\limits_{\lambda \succeq 0} ~~\max \limits_{\diag(X) \preceq 1, X \succeq 0} ~~c(\lambda) + \frac{1}{2} \langle M(\lambda), X \rangle.
\end{align}
We introduce a Lagrange multiplier $\frac{1}{2} \kappa \in \R^{n+1}$ for the constraint $\diag(X) \preceq 1$\footnote{The factor of $\frac{1}{2}$ is introduced for convenience}. Since $\opt_\text{sdp}$ is convex, by strong duality, we have 
\begin{align}
  \label{eq:app-lag-sdp}
  \opt_\text{sdp} &= \min\limits_{\lambda, \kappa \succeq 0} ~~\max \limits_{X \succeq 0}~~c(\lambda) + \frac{1}{2} \Big(\langle M(\lambda), X \rangle + \kappa^\top \mathbf{1} - \langle \diag(\kappa), X \rangle \Big),\\
  &= \min\limits_{\lambda, \kappa \succeq 0}~~ c(\lambda) + \frac{1}{2} \kappa^\top \mathbf{1} ~~\text{s.t.}~~\diag(\kappa) - M(\lambda) \succeq 0, \label{eq:app-lag-psd}
\end{align}
where the last equality follows from the facts that (i) when $\diag(\kappa) - M(\lambda)$ is not PSD, $\langle M(\lambda) - \diag(\kappa), X \rangle$ would be unbounded when maximizing over PSD matrices $X$ and (ii) when $\diag(\kappa) - M(\lambda) \succeq 0$, the maximum value of inner maximization over PSD matrices $X$ is $0$. 

Projecting onto the PSD constraint $\diag(\kappa) - M(\lambda) \succeq 0$ directly is still expensive. Instead, we take the following approach. For any non-negative $(\kappa, \lambda)$, we generate a \emph{feasible} $(\hat{\kappa}, \hat{\lambda})$ as follows.
\begin{align}
  \hat{\kappa} = \Big[ \kappa - \lambdamin^-\big[ \diag(\kappa) - M(\lambda)\big] \mathbf{1} \Big]^+, ~~\hat{\lambda} = \lambda
\end{align}
In other words, $\lambda$ remains unchanged with $\hat{\lambda} = \lambda$. To obtain $\hat{\kappa}$, we first compute the minimum eigenvalue $\lambdamin$ of the matrix $\diag(\kappa) - M(\lambda)$.
If this is positive, $(\kappa, \lambda)$ are feasible, and $\hat{\kappa} = \kappa, \hat{\lambda} = \lambda$.
However, if this is negative, we then add the negative portion of $\lambdamin^- = \hbr{\lambdamin}^-$ to the diagonal matrix to make $\diag(\hat{\kappa}) \succeq M(\lambda)$, and subsequently project onto the non-negativity constraint. 
The subsequent projection never decreases the value of $\hat{\kappa}$ and hence $\diag(\hat{\kappa}) - M(\lambda) \succeq 0$.

Plugging $\hat{\kappa}, \hat{\lambda}$ in the objective above, and removing the PSD constraint gives us the following final formulation.
\begin{align}
  \label{eq:app-final-sdp}
\opt_\text{sdp} = \opt_\text{relax} \eqdef \min\limits_{\lambda, \kappa \succeq 0} \,~ c\br{\lambda} + \half \Big[ \kappa - \lambda^{-}_{\text{min}}\br{ \diag(\kappa) - M(\lambda)}\mathbf{1} \Big]^{+\top} \mathbf{1}.
\end{align}
Note that feasible $\kappa, \lambda$ remain unchanged and hence the equality.
\end{proof}

\subsection{Relaxation comparison to \citet{raghunathan2018semidefinite}}
\label{app:primalsdp}
Our solver (Algorithm~\ref{alg:verify}) uses the formulation described in~\eqref{eq:final-sdp}, replicated above in~\eqref{eq:app-final-sdp}. In this section, we show that the above formulation is equivalent to the SDP formulation in~\citep{raghunathan2018semidefinite} when we use quadratic constraints to replace the ReLU constraints, as done in ~\citep{raghunathan2018semidefinite} and presented above in Appendix~\ref{app:relu}.
We show this by showing equivalence with an intermediate SDP formulation below. From Appendix~\ref{app:bc}, the solution to this intermediate fomulation matches that of relaxation we optimize~\eqref{eq:app-final-sdp}. 
\begin{align}
\label{eq:app-opt-sdp}
  \opt_\text{sdp} \eqdef \min\limits_{\lambda \succeq 0} ~~\max \limits_{\diag(X) \leq 1, X \succeq 0} ~~c(\lambda) + \frac{1}{2} \langle M(\lambda), X \rangle.
\end{align}
To mirror the block structure in $M(\lambda)$, we write $X \succeq 0$ as follows.
\begin{align}
  \label{eq:blocks}
  X = \begin{pmatrix}
    X_{11} & X_{x}^\top  \\
    X_{x} & X_{xx}
  \end{pmatrix},
  X_{xx} &\succeq \frac{1}{X_{11}} X_{x} X_{x}^\top,
  \end{align}
where the last condition follows by Schur complements. 

The objective then takes the form $\max \limits_{\diag(X_{xx}) \preceq 1, X_{11} \leq 1} g(\lambda)^\top X_x + \frac{1}{2} \langle H(\lambda), X_{xx} \rangle$. Note that the feasible set (over $X_{xx}, X_{x}$) for $X_{11} = 1$ contains the feasible sets for any smaller $X_{11}$, by the Schur complement condition above. Since $X_{11}$ does not appear in the objective, we can set $X_{11} = 1$ to obtain the following equality.

\begin{align}
  \label{eq:dual-sdp}
  \opt_\text{sdp} = \min\limits_{\lambda \succeq 0} ~~\max \limits_{\diag(X) \preceq 1, X_{11} = 1, X \succeq 0} ~~c(\lambda) + g(\lambda)^\top X_{x} + \frac{1}{2} \langle H(\lambda), X_{xx} \rangle, 
\end{align}
where $X_{11}$ is the first entry, and $X_{x}, X_{xx}$ are the blocks as described in~\eqref{eq:blocks}. 

\paragraph{Prior SDP.}
Now we start with the SDP formulation in ~\citep{raghunathan2018semidefinite}. Recall that we have a QCQP that represents the original verification problem with quadratic constraints on activations. The relaxation in~\cite{raghunathan2018semidefinite} involves intoducing a new matrix variable $P$ as follows.
\begin{align}
  P = \begin{pmatrix}
    P[1] & P[x]\\
    P[x] & P[xx].
    \end{pmatrix}
\end{align}

The quadratic constraints are now written in terms of $P$ where $P[x]$ replaces the linear terms and $P[xx]$ replaces the quadratic terms.~\citet{raghunathan2018semidefinite} optimize this primal SDP formulation to obtain $\opt_{\text{prior-sdp}}$. By strong duality, $\opt_{\text{prior-sdp}}$ matches the optimum of the dual problem obtained via the Lagrangian relaxation of the SDP. In terms of the quantities $g, H$ that we defined in this work (\eqref{eq:lag-concrete} and~\eqref{eq:lag-standard}), we have
\begin{align}
  \label{eq:prior-sdp}
  \opt_\text{prior-sdp} &= \min \limits_{\lambda \succeq 0}~~\max \limits_{\diag(P) \preceq 1, P[1] = 1, P \succeq 0} \sL_\text{prior-sdp}(P, \lambda) \\
  &= \min \limits_{\lambda \succeq 0}~~\max \limits_{\diag(P) \preceq 1, P[1] = 1, P \succeq 0}~~c(\lambda) + g(\lambda)^\top P[x] + \frac{1}{2} \langle H(\lambda), P[xx] \rangle. 
\end{align}
By redefining matrix $P$ as $X$, from~\eqref{eq:dual-sdp} and~\eqref{eq:prior-sdp}, we have $\opt_\text{sdp} = \opt_\text{prior-sdp}$. From ~\eqref{eq:app-final-sdp}, we have $\opt_\text{sdp} = \opt_\text{relax}$ and hence proved that the optimal solution of our formulation matches that of prior work~\citep{raghunathan2018semidefinite} when using the same quadratic constraints as used in~\citep{raghunathan2018semidefinite}. In other words, our reformulation that allows for a subgradient based memory efficient solver does \emph{not} introduce additional looseness over the original formulation that uses a memory inefficient interior point solver.

\subsection{Regularization of $\kappa$ via alternate dual formulation}
\label{app:kappa_reg}

In Section \ref{sec:dual-init}, we describe that it can be helpful to regularize $\kappa_{1:n}$ towards 0.
This is motivated by the following proposition:
\begin{proposition}
\label{prop:kappa-reg}
The optimal value $\opt$ is upper-bounded by the alternate dual problem
\begin{align}
\label{eq:app-alt-dual}
\opt \leq \min\limits_{\lambda, \kappa \succeq 0} \,~ \underbrace{c\br{\lambda} + \half \kappa_0}_{\hat{f}(\lambda, \kappa_0)}  ~~\text{s.t.}~~ \begin{pmatrix}
      \kappa_0 & -g\br{\lambda}^\top \\
      -g\br{\lambda} & -H\br{\lambda}
    \end{pmatrix} \succeq 0
\end{align}

Further, for any feasible solution $\lambda, \kappa_0$ for this dual problem, we can obtain a corresponding solution to $\opt_\text{relax}$ with $\lambda, \kappa_0, \kappa_{1:n} = 0, \kappa = \br{\kappa_0; \kappa_{1:n}}$, such that $f(\lambda, \kappa) = \hat{f}(\lambda, \kappa_0)$.
\end{proposition}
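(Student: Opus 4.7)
The plan is to derive the alternate dual as a special case of the formulation already established in Appendix \ref{app:bc}, and then verify the two claims by a direct substitution.

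First I would recall from the proof of Proposition 1 (specifically equation \eqref{eq:app-lag-psd}) that $\opt_\text{relax}$ is equal to
\begin{align*}
\opt_\text{sdp} \;=\; \min_{\lambda,\kappa \succeq 0} \; c(\lambda) + \tfrac{1}{2}\kappa^\top\mathbf{1} \quad \text{s.t.}\quad \diag(\kappa) - M(\lambda) \succeq 0,
\end{align*}
and in particular bounds $\opt$ from above. The first step is therefore to observe that restricting the feasible set of this SDP by fixing $\kappa_{1:n} = 0$ can only weaken the bound, and hence still upper-bounds $\opt$. Under this restriction, with $\kappa = (\kappa_0;0)$, a direct block-matrix computation gives
\begin{align*}
\diag(\kappa) - M(\lambda) \;=\; \begin{pmatrix} \kappa_0 & 0 \\ 0 & 0 \end{pmatrix} - \begin{pmatrix} 0 & g(\lambda)^\top \\ g(\lambda) & H(\lambda) \end{pmatrix} \;=\; \begin{pmatrix} \kappa_0 & -g(\lambda)^\top \\ -g(\lambda) & -H(\lambda) \end{pmatrix},
\end{align*}
which is exactly the matrix appearing in \eqref{eq:app-alt-dual}. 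The PSD constraint of the restricted SDP thus coincides with the PSD constraint in the alternate dual, and the objective $c(\lambda) + \tfrac{1}{2}\kappa^\top\mathbf{1}$ reduces to $c(\lambda) + \tfrac{1}{2}\kappa_0 = \hat{f}(\lambda,\kappa_0)$. This chain of identifications gives $\opt \leq \opt_\text{sdp} \leq \hat{f}(\lambda,\kappa_0)$ for every feasible $(\lambda,\kappa_0)$, establishing the first claim.

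For the second claim I would work directly with the expression for $f(\lambda,\kappa)$ from \eqref{eq:final-sdp}. Given any feasible $(\lambda,\kappa_0)$ for \eqref{eq:app-alt-dual}, set $\kappa = (\kappa_0;\,0)$; both are nonnegative, so this is feasible for $\opt_\text{relax}$. By the block computation above, $\diag(\kappa) - M(\lambda)$ is PSD, so $\lambda_{\min}(\diag(\kappa)-M(\lambda))\geq 0$ and consequently $\lambda_{\min}^-(\diag(\kappa)-M(\lambda)) = 0$. Substituting into $f$ gives
\begin{align*}
f(\lambda,\kappa) \;=\; c(\lambda) + \tfrac{1}{2}\mathbf{1}^\top [\kappa]^+ \;=\; c(\lambda) + \tfrac{1}{2}\kappa_0 \;=\; \hat{f}(\lambda,\kappa_0),
\end{align*}
using $\kappa_{1:n}=0$ and $\kappa_0 \geq 0$. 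This yields the matched-objective correspondence, which also recovers the inequality $\opt \leq \opt_\text{relax} \leq f(\lambda,\kappa) = \hat{f}(\lambda,\kappa_0)$ without appealing to the identification step above.

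There is no substantial obstacle here: the proof is entirely a bookkeeping exercise once one recognizes that the constraint in \eqref{eq:app-alt-dual} is literally the PSD condition $\diag(\kappa)-M(\lambda)\succeq 0$ at $\kappa_{1:n}=0$. The only thing to be slightly careful about is that the $\kappa_{1:n}$ slots truly contribute nothing: they do not appear in the constraint (because they would be cancelled by the zero block of $M(\lambda)$ only when they vanish, and otherwise would enlarge the PSD set in a direction that cannot tighten the bound), and they inflate the objective linearly, so zeroing them is optimal. This justifies the remark in Section \ref{sec:dual-init} that regularizing $\kappa_{1:n}$ toward $0$ is consistent with an optimal dual solution.
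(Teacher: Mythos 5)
Your proof is correct, and its second half---setting $\kappa=(\kappa_0;0)$, noting that $\diag(\kappa)-M(\lambda)$ is then exactly the PSD matrix in \eqref{eq:app-alt-dual} so that $\lambda_{\min}^{-}$ vanishes and $f(\lambda,\kappa)=\hat f(\lambda,\kappa_0)$---coincides with the paper's argument for the correspondence claim. The first half, however, takes a genuinely different route. The paper derives \eqref{eq:app-alt-dual} from the Lagrangian dual \eqref{eq:app-alt-lag-dual} in which the box constraints are dropped from the inner maximization: it maximizes over $x$ in closed form and then converts the resulting scalar condition $\kappa_0 \geq -g(\lambda)^\top H(\lambda)^{-1} g(\lambda)$ into the PSD constraint via a Schur complement. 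You instead obtain \eqref{eq:app-alt-dual} by restricting the already-established SDP dual \eqref{eq:app-lag-psd} to $\kappa_{1:n}=0$; since restricting the feasible set of a minimization can only increase its value, the upper bound on $\opt$ is preserved. Your route is more economical and more robust: it reuses Proposition 1, and it sidesteps the implicit requirement that $H(\lambda)$ be negative definite for the unconstrained inner maximum to be finite (as well as the paper's small slip of writing $g(\lambda)^\top H(\lambda) g(\lambda)$ where $g(\lambda)^\top H(\lambda)^{-1} g(\lambda)$ is meant). What the paper's derivation buys is the interpretation that motivates the whole proposition: \eqref{eq:app-alt-dual} is precisely the dual obtained by encoding the bounds only in the Lagrangian and not in the inner maximization, which is what justifies calling $\kappa_{1:n}$ the multipliers of a redundant constraint in Section \ref{sec:dual-init}; your derivation does not supply that reading. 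One caution on your closing remark: the restriction argument by itself does not show that zeroing $\kappa_{1:n}$ is \emph{optimal} for $\opt_\text{relax}$ (a restriction can only raise the minimum), so that sentence is not established by your proof---though it is also not among the two claims of the proposition as stated.
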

\begin{proof}
We begin with the Lagrangian dual
\begin{align}
\label{eq:app-alt-lag-dual}
\opt \leq \opt_\text{lagAlt} \eqdef \min \limits_{\lambda \succeq 0}~\max \limits_{x}~c(\lambda) + x^\top g(\lambda) + \frac{1}{2}x^\top H(\lambda)x .
\end{align}
Note that this is exactly the dual from Equation \eqref{eq:lag-standard}, without the bound constraints on $x$ in the inner maximization.
In other words, whereas Equation \eqref{eq:lag-standard} encodes the bound constraints into both the Lagrangian and the inner maximization constraints, in Equation \ref{eq:app-alt-lag-dual}, the bound constraints are encoded in the Lagrangian only.

The inner maximization can be solved in closed form, and is maximized for $x = -\Hl^{-1}\gl$, yielding 
\begin{align}
\opt_\text{lagAlt}=~\min \limits_{\lambda \succeq 0}~ \cl - \frac{1}{2}\gl^\top \Hl \gl.
\end{align}
We can then reformulate using Schur complements:
\begin{subequations}
  \begin{align}
\opt_\text{lagAlt}=~&\min \limits_{\lambda \succeq 0, \kappa_0}~ \cl + \frac{1}{2}\kappa_0  ~~\text{s.t.}~~ \kappa_0 \geq -\gl \Hl^{-1} \gl \\
=~&\min \limits_{\lambda \succeq 0, \kappa_0}~ \cl + \frac{1}{2}\kappa_0  ~~\text{s.t.}~~ \hat{M}(\lambda) \succeq 0 ~~\text{where} \\
    &\hat{M}(\lambda) = \begin{pmatrix}
      \kappa_0 & -g\br{\lambda}^\top \\
      -g\br{\lambda} & -H\br{\lambda}
    \end{pmatrix}.
\end{align}
\end{subequations}

To see that this provides a corresponding solution to $\opt_\text{relax}$, we note that when $\hat{M} \succeq 0$, the choice $\kappa = \br{\kappa_0; \kappa_{1:n}}, \kappa_{1:n} = 0$ makes $\diag(\kappa) - M(\lambda) = \hat{M}(\lambda)$, and so $\lambdamin^- \big[ \diag(\kappa) - M(\lambda)\big] = 0$. Thus, for any solution $\lambda, \kappa_0$, we have $f(\lambda, \kappa) = \hat{f}(\lambda, \kappa_0) = \cl + \frac{1}{2} \kappa_0$.

\end{proof}

\emph{Remark.}
Proposition \ref{prop:kappa-reg} indicates that regularizing $\kappa_{1:n}$ towards 0 corresponds to solving the alternate dual formulation $\opt_\text{dualAlt}$, which does not use bound constraints for the inner maximization.
In this case, the role of $\kappa_{1:n}$ and $\hat{\kappa}$ is slightly different: even in the case when $\kappa_{1:n}$ is clamped to $0$, the bound-constrained formulation allows an efficient projection operator, which in turn provide efficient any-time bounds.

\subsection{Informal comparison to standard maximum eigenvalue formulation}
\label{app:max_eig_comparison}

Our derivation for Proposition \ref{prop:main} is similar to maximum eigenvalue formulations for dual SDPs -- our main emphasis is that when applied to neural networks, we can use autodiff and implicit matrix-vector products to efficiently compute subgradients.

We also mention here a minor difference in derivations for convenience of readers.
The common derivation for these maximum eigenvalue formulations starts with an SDP primal under the assumption that all feasible solutions for the matrix variable $X$ have fixed trace.
This trace assumption plays an analogous role to our interval constraints in the QCQP ~\eqref{eq:sdp-app}. 
These interval constraints also imply a trace constraint (since $\diag(X) \leq 1$ implies $\textrm{tr}(X) \leq N+1$), but the interval constraints also allow us to use $\kappa$ to smooth the optimization.
Without $\kappa$, any positive eigenvalues of $M(\lambda)$ cause large spikes in the objective -- simplifying the objective $f(\lambda, \kappa)$ in \eqref{eq:final-sdp} reveals the term $(N + 1) \lambda_\textrm{max}^+(M(\lambda))$ which grows linearly with $N$.
As expected, this term also appears in these other formulations \citep{helmberg2000spectral,nesterov2007smoothing}.

\if0
\subsection{SDP relaxation}

We first show that the SDP Lagrangian relaxation in Equation \ref{eq:dual-sdp} yields a valid bound on the QCQP dual in 
Equation \eqref{eq:qcqp_dual}.
The derivation is a standard SDP relaxation.

From the QCQP dual, we have
\begin{align*}
    \opt &\leq \min\limits_{\lambda \succeq 0} \underbrace{\max \limits_{-1 \preceq x \preceq 1} \Big (c(\lambda) + g(\lambda)^\top x + \frac{1}{2} x^\top H(\lambda)x \Big)}_{f(\lambda)}.
\end{align*}

Replacing $x$ with the matrix variable $X$, we have
\begin{align*}
	f(\lambda) = &\max \limits_{-1 \preceq x \preceq 1} c(\lambda) + \frac{1}{2} \ip{M(\lambda), X}\\
	&\text{s.t.}~ X = [1; x][1; x]^\top \\
	&\text{where}~
    M(\lambda) = \begin{pmatrix}
     0 & g\br{\lambda}^\top \\
     g\br{\lambda} & H\br{\lambda}
     \end{pmatrix}, 
\end{align*}

Then for any feasible $-1 \leq x \leq 1$, we have $\diag(X) \preceq 1$.
By relaxing the rank-1 constraint $X = [1; x][1; x]^\top$ to the PSD constraint $X \succeq 0$, we obtain the SDP dual

\begin{align*}
	f(\lambda) \leq &\max \limits_{X} ~c(\lambda) + \frac{1}{2} \ip{M(\lambda), X}\\
	&\text{s.t.}~ \diag(X) \preceq 1, ~X \succeq 0 \\
\end{align*}
\fi

\if0
\subsection{Dual reformulation}

We now derive the dual reformulation from Equation \ref{eq:new-dual-sdp} and show that under mild conditions, it introduces no additional looseness in the dual bound.

Introducing a Lagrange multiplier $\kappa \in \R^{1+N}$ for the constraint $\diag(X) \preceq 1$, we have

\begin{align*}
	&\min\limits_{\lambda \succeq 0}~ \max\limits_{X \succeq 0} ~c(\lambda) + \frac{1}{2} \ip{M(\lambda), X} \;\;:\;\;
	\diag(X) \preceq 1 \\
	\leq &\min\limits_{\lambda, \kappa \succeq 0} ~ \max\limits_{X \succeq 0} ~c(\lambda) + \frac{1}{2} \ip{M(\lambda), X} - \frac{1}{2}\kappa^\top \hbr{\diag(X) - \bm{1}} \\
	= &\min\limits_{\lambda, \kappa \succeq 0} ~ \max\limits_{X \succeq 0} ~c(\lambda) + \frac{1}{2} \kappa^\top\bm{1} +
	\frac{1}{2} \ip{M(\lambda) - \diag(\kappa), X} \;\;:\;\;
	\hbr{\diag(\kappa) - M(\lambda)} \succeq 0 \\
	= &\min\limits_{\lambda, \kappa \succeq 0} ~c(\lambda) + \frac{1}{2} \kappa^\top\bm{1} \;\;:\;\;
	\hbr{\diag(\kappa) - M(\lambda)} \succeq 0 \\
\end{align*}

The second line is by Lagrangian duality, and becomes an equality in non-pathological cases due to Slater's condition.
The third line holds since if $\diag(\kappa) - M(\lambda)$ is not PSD, we can choose a corresponding eigenvector $v$ with negative eigenvalue, and the choice $bvv^\top$ for an arbitrarily large constant $b$ causes the inner maximization to be unbounded.
Finally, when $\diag(\kappa) - M(\lambda)$ is PSD, the inner maximization is satisfied by $X=0$ which yields the last equality.
\fi

\if0
\subsection{Lanczos and Matrix-vector product algorithm}

\begin{algorithm}[htb] \caption{Lanczos sub-routine} 
\label{alg:mat_vec_product}               
\hspace*{\algorithmicindent} \textbf{Input:} QCQP Lagrangian $L$ (Eq. \ref{eq:lag}), vector $v$, dual variables $\lambda$, $\kappa$ \\
\hspace*{\algorithmicindent} \textbf{Output:} Minimum eigenvector $v^*$ of $M(\lambda) - \diag(\kappa)$ \\
\hspace{3pt}

\begin{algorithmic}
    \STATE $\texttt{g} := \deriv{\sL}{x}(\bm{0}, \lambda)$
    \STATE $\texttt{Hv(v)} := \deriv{}{x}\hbbr{\big\langle \deriv{\sL}{x}(x, \lambda), v\big\rangle}(\bm{0}, \lambda)$
    \STATE Define matrix-vector product function 
    	$$\texttt{Av(v)} =
    	\begin{pmatrix}[1.2] 0 & g\br{\lambda}^\top \\ g\br{\lambda} & H\br{\lambda} \end{pmatrix}
    	\begin{pmatrix}[1.2] v_0 \\ v_{1:n} \end{pmatrix} 
    	- \diag(\kappa)v := 
    	\begin{pmatrix}[1.2] \ip{\texttt{g}, v_{1:n}} \\ v_0 \texttt{g} + \texttt{Hv}(v_{1:n}) \end{pmatrix}
    	- \ip{v, \kappa}
    	$$
    \STATE Use Lanczos method to obtain minimum eigenvector $v^*$ of $Av$
    \STATE \textbf{return} $v^*$
\end{algorithmic}
\end{algorithm}
\fi

\subsection{Proof of Proposition 2}
\label{app:initialization}

\begin{manualtheorem}{2}
For any choice of $\lambda$ satisfying $H\br{\lambda}=0$, the optimal choice $\kappa_\text{opt}(\lambda)$ is given by
\begin{align*}
\kappa_0^\ast = \sum_{i=1}^n |g\br{\lambda}|_i \ \ \   ; \ \ \ 
\kappa_{1:n}^\ast = |g\br{\lambda}|
\end{align*}
where we have divided $\kappa = [ \,\kappa_0 ; \kappa_{1:n} \, ]$ into a leading scalar $\kappa_0$ and a vector $\kappa_{1:n}$.
\end{manualtheorem}

\begin{proof}
We use the dual expression from Equation \eqref{eq:app-lag-psd}:
\begin{align*}
  \opt_\text{sdp} = \min\limits_{\lambda, \kappa \succeq 0}~~ c(\lambda) + \frac{1}{2} \kappa^\top \mathbf{1} ~~\text{s.t.}~~\diag(\kappa) - M(\lambda) \succeq 0.
\end{align*}

Notice that by splitting $\kappa$ into its leading component $\kappa_0$ (a scalar) and the subvector $\kappa_{1:n}=[\kappa_1, \ldots, \kappa_n]$ (a vector of the same dimension as $x$),
\pl{would have been helpful to have dimensions emphasized throughout or else it's hard to keep track} 
the constraint between $\kappa, \lambda$ evaluates to
\[\diag\br{\kappa}-M\br{\lambda} = \begin{pmatrix} \kappa_0 & g(\lambda)^\top \\
g(\lambda) & \diag\br{\kappa_{1:n}}
\end{pmatrix}\succeq 0\]

Using Schur complements, we can rewrite the PSD constraint as
\begin{align*}
\Big( \diag(\kappa) - M(\lambda) \Big) \succeq 0 \Leftrightarrow \kappa_0 \geq \sum_{i \geq 1} \kappa_i^{-1} \br{g\br{\lambda}}_i^2
\end{align*}
Since the objective is monotonically increasing in $\kappa_0$, the optimal choice for $\kappa_0$ is the lower bound above
$\kappa_0 \geq \sum_{i \geq 1} \kappa_i^{-1} \br{g\br{\lambda}}_i^2$.
Given this choice, the objective in terms of $\kappa_{1:n}$ becomes
\[\sum_{i \geq 1} \kappa_i + \frac{\br{g\br{\lambda}}_i^2}{\kappa_i}\]
By the AM-GM inequality, the optimal choice for the remaining terms $\kappa_{1:n}$ is then
$\kappa_{1:n} = |g\br{\lambda}|$.
\end{proof}

\section{Experimental details}
\label{app:experimental_details}
\noindent
\subsection{Verifying Adversarial Robustness: Training and Hyperparameter Details}
\label{supp:adv}
\paragraph{Optimization details.}
We perform subgradient descent using the Adam~\citep{kingma2014adam} update rule for MLP experiments, and RMSProp for CNN experiments.
We use an initial learning rate of $1e-3$, which we anneal twice by $10$.
We use 15K optimization steps for all MLP experiments, 60K for CNN experiments on \mnist, and 150K on \cifar.
All experiments run on a single P100 GPU.

\paragraph{Adaptive learning rates}
For MLP experiments, we use an adaptive learning rate for dual variables associated with the constraint $x_{i+1} \odot \big(x_{i+1} - \layer_i(x_i) \big) \preceq 0$, as mentioned in Section \ref{sec:dual-init}.
In early experiments for MLP-Adv, we observed very sharp curvature in the dual objective with respect to these variables -- the gradient has values on the order $\approx 1e3$ while the solution at convergence has values on the order of $\approx 1e-2$.
Thus, for all MLP experiments, we decrease learning rates associated with these variables by a $10\times$ factor.
While SDP-FO produced meaningful bounds even without this adjustment, we observed that this makes optimization significantly more stable for MLP experiments.
This adjustment was not necessary for CNN experiments.

\paragraph{Training Modes}
We conduct experiments on networks trained in three different modes. 
\textbf{Nor} indicates the network was trained only on unperturbed examples, with the standard cross-entropy loss. 
\textbf{Adv} networks use adversarial training \citep{madry2017towards}. 
\textbf{Mix} networks average the adversarial and normal losses, with equal weights on each.
We find that \textbf{Mix} training, while providing a significant improvement in test-accuracy, renders the model less verifiable (across verification methods) 
than training only with adversarial examples. 

The suffix \textbf{-4} in the network name (e.g. CNN-A-Mix-4) indicates networks trained with the large perturbation radius $\epsilon_\mathrm{train}=4.4/255$.
We find that using larger $\epsilon_\mathrm{train}$ implicitly facilitates verification at smaller $\epsilon$ (across verification methods), but is accompanied by a significant drop in clean accuracy.
For all other networks, we choose $\epsilon_\mathrm{train}$ to match the evaluation $\epsilon$: i.e. generally $\epsilon=0.1$ on \mnist and $\epsilon=2.2/255$ on \cifar (which slightly improves adversarial robustness relative to $\epsilon=2/255$ as reported in \cite{gowal2018effectiveness}).

\paragraph{Pre-trained networks}
For the networks \textbf{MLP-LP},  \textbf{MLP-SDP},  \textbf{MLP-Adv}, we use the trained parameters from \cite{raghunathan2018semidefinite}, and for the 
networks \textbf{MLP-Nor},  \textbf{MLP-Adv-B} we use the trained parameters from \cite{salman_barrier}.

\paragraph{Model Architectures}
Each model architecture is associated with a prefix for the network name.
Table \ref{tab:model_details} summarizes the CNN model architectures.
The MLP models are taken directly from \cite{raghunathan2018semidefinite, salman_barrier} and use fully-connected layers with ReLU activations.
The number of neurons per layer is as follows: \textbf{MLP-Adv} 784-200-100-50-10, \textbf{MLP-LP/MLP-SDP} 784-500-10, \textbf{MLP-B/MLP-Nor} 784-100-100-10.
\begin{table}[h!]
\begin{center}
\begin{footnotesize}
\begin{tabular}{l|ccc}
    \hline 
    {\bf Model} & {\bf CNN-A} &  {\bf CNN-B} \\
    \hline 
    \multirow{3}{*}{\bf Architecture} & \textsc{CONV 16 4×4+2}     & \textsc{CONV 32 5×5+2}  \\
     & \textsc{ CONV 32 4×4+1 }   & \textsc{CONV 128 4×4+2}  \\
     & \textsc{FC 100} & \textsc{FC 250}   \\
     & \textsc{FC 10} & \textsc{FC 10}   \\
    \hline
\end{tabular}
\end{footnotesize}
\end{center}
\caption{Architecture of CNN models used on MNIST and CIFAR-10. 
Each layer (except the last fully connected layer) is followed by ReLU activations. 
 \textsc{CONV t w×h+s} corresponds to a convolutional layer with 
 \textsc{t} filters of size \textsc{w×h}  with stride of \textsc{s} in both dimensions. 
 \textsc{FC t} corresponds to a fully connected layer with \textsc{t} output neurons. 
}\label{tab:model_details}

\end{table}

\subsection{Verifying VAEs}
\label{supp:vae}
\paragraph{Architecture Details}
We train a VAE on the MNIST dataset with the architecture detailed in Table \ref{tab:vae_arch}.
\begin{table}[h!]
\begin{center}
\begin{footnotesize}
\begin{tabular}{cc}
    \hline 
     {\bf Encoder} & {\bf Decoder}  \\
    \hline 
     \textsc{FC 512} & \textsc{FC 1568}  \\
      \textsc{FC 512} & \textsc{ CONV-T 32 3×3+2}     \\
       \textsc{FC 512} &  \textsc{ CONV-T 3×3+1}     \\
       \textsc{FC  16} &   \\

    \hline
\end{tabular}
\end{footnotesize}
\end{center}
\caption{
The VAE consists of an encoder and a decoder, and the architecture details  for both the encoder and the decoder are provided here.
 \textsc{CONV-T t w×h+s} corresponds to a transpose convolutional layer with 
 \textsc{t} filters of size \textsc{w×h}  with stride of \textsc{s} in both dimensions. 
}\label{tab:vae_arch}
\end{table}
\paragraph{Optimization details.}
We perform subgradient descent using RMSProp  with an initial learning rate of $1e-3$, which we anneal twice by $10$.
All experiments run on a single P100 GPU, and each verification instance takes under 7 hours to run.
\paragraph{Computing bounds on the reconstruction loss based on interval bound propagation}
Interval bound propagation lets us compute bounds on the activations of the decoder, given bounded $l_\infty$ perturbations in the 
latent space of the VAE. 
Given a lower bound $lb$ and an upper bound $ub$ on the output of the decoder, we can compute an upper bound on the reconstruction 
error $\lVert s - \hat{s} \rVert_2$ over all valid latent perturbations as
 $\lVert \max\{\lvert ub - s \rvert, \lvert s - lb \rvert \}\rVert_2^2 $, where $\max$ represents 
 the element-wise maximum between the two vectors.
We visualize images perturbed by noise corresponding to the threshold $\tau$ on the reconstruction error in Section \ref{subsec:generative_model}
in Figure \ref{fig:visualize_vae}. 
 
 \begin{figure}
 \centering
 \begin{subfigure}{.45\textwidth}
   \centering
   \includegraphics[width=\linewidth]{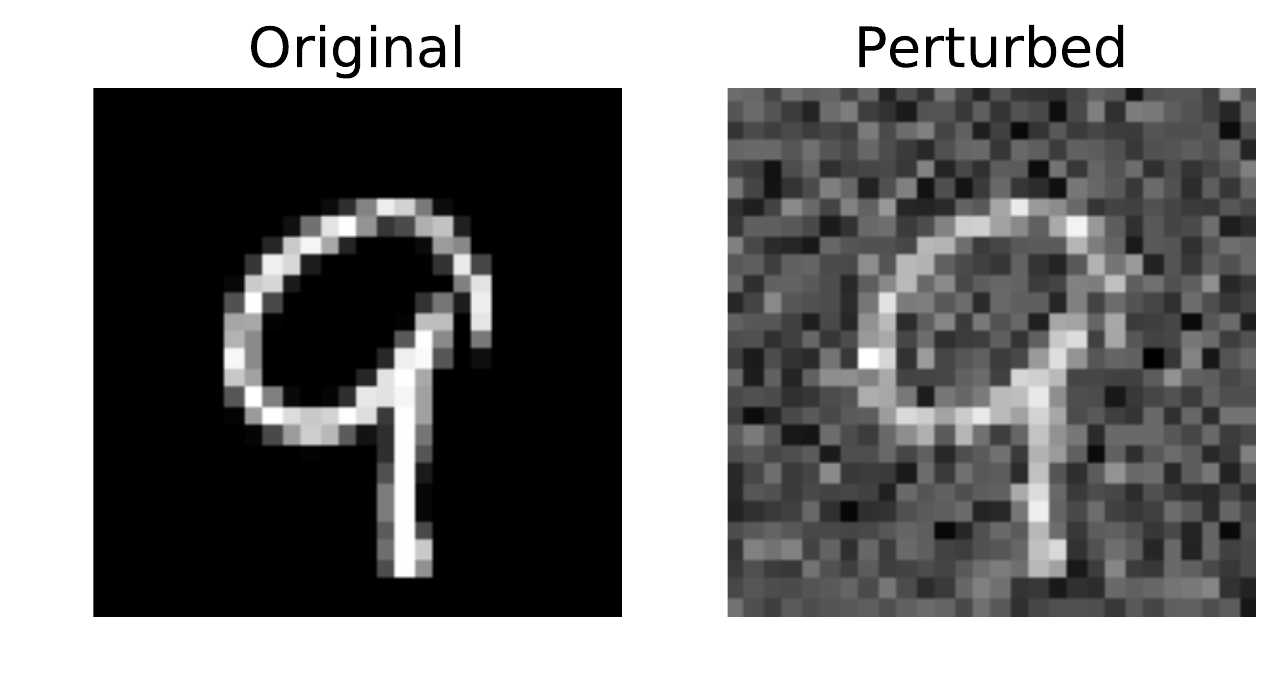}
   \caption{Original and Perturbed Digit `9'}
   \label{fig:sub1}
 \end{subfigure}%
 \begin{subfigure}{.45\textwidth}
   \centering
   \includegraphics[width=\linewidth]{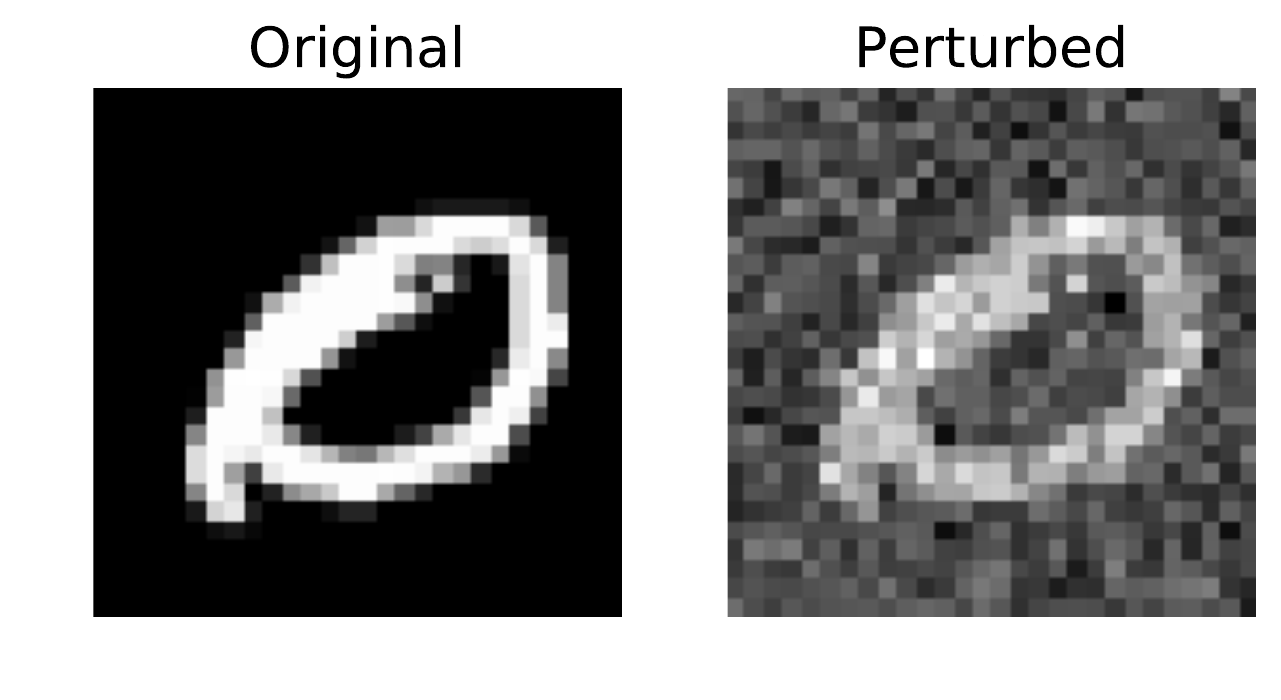}
   \caption{Original and Perturbed Digit `0'}
   \label{fig:sub2}
 \end{subfigure}
 \caption{Two digits from the MNIST data set, and the corresponding images when perturbed with Gaussian noise, whose squared $\ell_2$-norm is equal to the 
 threshold ($\tau=40.97$). $\tau$ corresponds to threshold on the reconstruction error used in equation \eqref{eq:vae_spec}.}
 \label{fig:visualize_vae}
 \end{figure}
\section{Additional results}
\label{app:additional_results}

\subsection{Detailed comparison to off-the-shelf solver}
\label{app:sdpip_sdpfo_comparison}

\paragraph{Setup} We isolate the impact of optimization by comparing performance to an off-the-shelf solver with the same SDP relaxation.
For this experiment, we use the MLP-Adv network from \citersl, selecting quadratic constraints to attain an equivalent relaxation to \citersl.
We compare across 10 random examples, using the target label with the highest loss under a PGD attack, i.e. the target label closest to being misclassified.
For each example, we measure $\underline{\Phi}_{\textsc{PGD}}$, $\overline{\Phi}_{\textsc{SDP-IP}}$, and $\overline{\Phi}_{\textsc{SDP-FO}}$, where $\overline{\Phi}$ and $\underline{\Phi}$ are as defined in Section \ref{subsec:image_classifier}.
Since the interior point method used by MOSEK can solve SDPs exactly for small-scale problems, this allows analyzing looseness incurred due to the relaxation vs. optimization.
In particular, $\overline{\Phi}_{\textsc{SDP-IP}} - \underline{\Phi}_{\textsc{PGD}}$ is the relaxation gap, plus any suboptimality for PGD, while $\overline{\Phi}_{\textsc{SDP-IP}} - \overline{\Phi}_{\textsc{SDP-FO}}$ is the optimization gap due to inexactness in the SDP-FO dual solution.

\begin{figure}
    \centering
    \begin{subfigure}[t]{0.6\textwidth}
        \includegraphics[width=\textwidth]{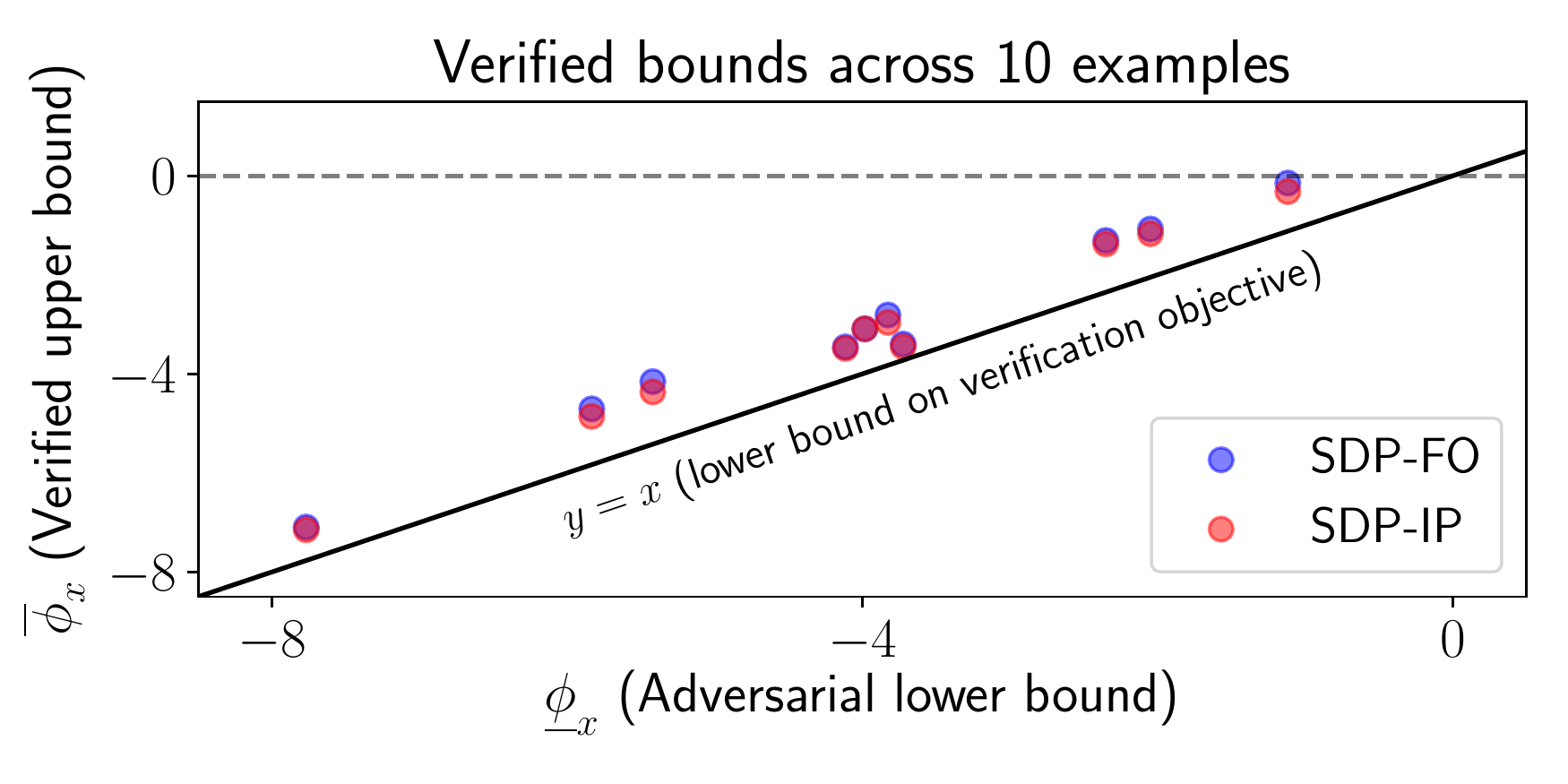}
        \caption{\mnist, MLP-Adv}
    \end{subfigure}
    \caption{
    \emph{Comparison to off-the-shelf solver.}
    For 10 examples on \mnist, we plot the verified upper bound on $\phi_x$ against the adversarial lower bound (using a single target label for each), comparing SDP-FO to the optimal SDP bound found with SDP-IP (using MOSEK).
    In all cases, the SDP-FO bound is very close to the SDP-IP bound, demonstrating that SDP-FO converges to a near-optimal dual solution.
    Note that in many cases, the scatter points for SDP-FO and SDP-IP are directly overlapping due to the small gap.
    }
\label{fig:appendix_scatter}
\end{figure}

\paragraph{Results}
We observe that SDP-FO converges to a near-optimal dual solution in all 10 examples.
This is shown in Figure \ref{fig:appendix_scatter}.
Quantitatively, the relaxation gap $\overline{\Phi}_{\textsc{SDP-IP}} - \underline{\Phi}_{\textsc{PGD}}$ has a mean of $0.80$ (standard deviation $0.22$) over the 10 examples, while the optimization gap $\overline{\Phi}_{\textsc{SDP-IP}} - \overline{\Phi}_{\textsc{SDP-FO}}$ has a mean of $0.10$ (standard deviation $0.07$), roughly $8\times$ smaller.
Thus, SDP-FO presents a significantly more scalable approach, while sacrificing little in precision for this network.

\emph{Remark.} While small-scale problems can be solved exactly with second-order interior point methods, these approaches have poor asymptotic scaling factors.
In particular, both the SDP primal and dual problems involve matrix variables with number of elements quadratic in the number of network activations $N$.
Solving for the KKT stationarity conditions (e.g. via computing the Cholesky decomposition) then requires memory $O(N^4)$.
At a high-level, SDP-FO uses a first-order method to save a quadratic factor, and saves another quadratic factor through use of iterative algorithms to avoid materializing the $M(\lambda)$ matrix.
SDP-FO achieves $O(Nk)$ memory usage, where $k$ is the number of Lanczos iterations, and in our experiments, we have found $k \ll N$ suffices for Lanczos convergence.

\subsection{Investigation of relaxation tightness for MLP-Adv}
\label{app:tighter_pgdnn_results}

\paragraph{Setup} 
The discussion above in Appendix \ref{app:sdpip_sdpfo_comparison} suggests that SDP-FO is a sufficiently reliable optimizer so that the main remaining obstacle to tight verification is tight relaxations.
In our main experiments, we use simple interval arithmetic~\citep{mirman2018differentiable,gowal2019scalable} for bound propagation, to match the relaxation in \citersl.
However, by using CROWN~\citep{zhang2018efficient} for bound propagation, we can achieve a tighter relaxation.

\paragraph{Results}
Using CROWN bounds in place of interval arithmetic bounds improves the overall verified accuracy from $83.0\%$ to $91.2\%$.
This closes most of the gap to the PGD upper bound of $93.4\%$.
For this model, while the SDP relaxation still yields meaningful bounds when provided very loose initial bounds, the SDP relaxation still benefits significantly from tighter initial bounds.
More broadly, this suggests that SDP-FO provides a reliable optimizer, which combines naturally with development of tighter SDP relaxations.

\subsection{Verifying Adversarial Robustness: Additional Results}
Table \ref{tab:extra_results} provides additional results on verifying adversarial robustness 
for different perturbation radii and training modes.
Here, we consider perturbations and training-modes not included in Table \ref{tab:results_summary}.
We find that across settings, \textbf{\sdpfo} outperforms the \textbf{LP}-relaxation.
\begin{table}[h!]
\begin{center}
\begin{footnotesize}
\resizebox{\textwidth}{!}{%
\begin{tabular}{llll|cc|cc}
    \hline
    & & &   {\bf Training}  & \multicolumn{2}{c|}{Accuracy} & \multicolumn{2}{c}{Verified Accuracy} \\
    {\bf Dataset} & {\bf Epsilon} & {\bf Model} &  {\bf Epsilon}  & {\bf Nominal} & {\bf PGD} & {\bf SDP-FO (Ours)} & {\bf LP}  \\
    \hline 
    \multirow{1}{*}{\mnist} & \multirow{1}{*}{$\epsilon = 0.3$} 
    & CNN-A-Adv  & $\epsilon_\mathrm{train} = 0.3$ & 98.6\% & 80.0\% &  43.4\% & \hp0.2\%\\[1pt]
    \hline
    \multirow{4}{*}{\cifar} & \multirow{2}{*}{$\epsilon = \frac{2}{255}$} & 
    CNN-A-Adv & $\epsilon_\mathrm{train} = \frac{2.2}{255}$  & 68.7\% & 53.8\% &  39.6\% & \hp5.8\% \\[1pt]
     &   & CNN-A-Adv-4  & $\epsilon_\mathrm{train} = \frac{4.4}{255}$ & 56.4\% & 49.4\% & 40.0\%  & 38.9\%\\ [1pt]
     & \multirow{2}{*}{$\epsilon = \frac{8}{255}$} 
     & CNN-A-Adv & $\epsilon_\mathrm{train} = \frac{8.8}{255}$ & 46.9\% &  30.6\% & 18.0\% & \hp3.8\% \\[1pt]
     &   & CNN-A-Mix & $\epsilon_\mathrm{train} = \frac{8.8}{255}$ & 56.7\% & 26.4\% &  \hp9.0\% & \hp0.1\%  \\[1pt]
    \hline 
\end{tabular}
}
\end{footnotesize}
\end{center}
\caption{Comparison of verified accuracy across various networks and perturbation radii. 
All  \sdpfo numbers computed on the first 100 test set examples, and numbers for \textbf{LP} on 
the first 1000 test set examples.
The perturbations and training-modes considered here differ from those in Table \ref{tab:results_summary}.
For all networks, \sdpfo outperforms the \textbf{LP}-relaxation baseline.
}\label{tab:extra_results}
\vspace{-5pt}
\end{table}

\subsection{Comparison between Lanczos and exact eigendecomposition}

All final numbers we report use the minimum eigenvalue from an exact eigendecomposition (we use the \texttt{eigh} routine available in SciPy ~\citep{scipy}).
However, the exact decomposition is far too expensive to use during optimization.
On all networks we studied, Lanczos provides a reliable surrogate, while using dramatically less computation.
For example, for CNN-A-Mix, the average gap between the exact and Lanczos dual bounds -- the values of Equation \eqref{eq:final-sdp} using the true $\lambdamin$ compared to the Lanczos approximation of $\lambdamin$) -- is $0.14$ with standard deviation $0.07$.
This gap is small compared to the overall gap between the verified upper and adversarial lower bounds, which has mean $0.60$ with standard deviation $0.22$.
We observed similarly reliable Lanczos performance across models, for both image classifier and VAE models in Sections \ref{subsec:image_classifier} and \ref{subsec:generative_model}. 

At the same time, Lanczos is dramatically faster than the exact eigendecomposition: roughly $0.1$ seconds (using 200 Lanczos iterations) compared to $5$ minutes.
For the VAE model, this gap is even larger: roughly $0.2$ seconds compared to $2$ hours.
For even larger models, it may be infeasible to compute the exact eigendecomposition even once.
Although unnecessary in our current work, high-confidence approximation bounds for eigenvectors and associated eigenvalues from Lanczos can be applied in such cases ~\citep{kuczynski1992estimating,parlett1998symmetric}. 

\end{document}